\theoremstyle{plain}
\newtheorem{theorem}{Theorem}[section]
\newtheorem{lemma}[theorem]{Lemma}
\newtheorem{corollary}[theorem]{Corollary}
\theoremstyle{definition}
\newtheorem{proposition}[theorem]{Proposition}
\newtheorem{definition}[theorem]{Definition}
\newtheorem{remark}[theorem]{Remark}
\newtheorem{example}{Example}[section]
\newtheorem{assumption}{Assumption}
\def\T{{ \mathrm{\scriptscriptstyle T} }}
\newcommand{\ep}{\varepsilon}
\newcommand{\R}{\mathbb{R}}
\newcommand{\K}{\mathbb{K}}
\newcommand{\caH}{\mathcal{H}}
\newcommand{\caX}{\mathcal{X}}
\newcommand{\bbS}{\mathbb{S}}
\providecommand{\cref}{\prettyref}
\DeclareMathOperator*{\argmin}{arg\,min}
\newcommand{\ang}[1]{\left\langle{#1}\right\rangle}
\DeclareMathOperator{\spn}{span}
\DeclareMathOperator{\ran}{Ran}
\title{Kernel interpolation generalizes poorly
}
\author{
  Yicheng Li, Haobo Zhang \\
  Center for Statistical Science,
Department of Industrial Engineering, Tsinghua University, \\
100084, Beijing, China \\ 
  \texttt{\{liyc22, zhang-hb21\}@mails.tsinghua.edu.cn} \\
   \And
  Qian Lin \\
  Center for Statistical Science,
Department of Industrial Engineering, Tsinghua University, \\
100084, Beijing, China \\ 
  \texttt{qianlin@tsinghua.edu.cn} \\
}
\begin{document}
\maketitle

\begin{abstract}
One of the most interesting problems in the recent renaissance of the studies in kernel regression might be whether kernel interpolation can generalize well, since it may help us understand the `benign overfitting phenomenon' reported in the literature on deep networks. In this paper, under mild conditions, we show that for any $\ep>0$, the generalization error of kernel interpolation
    is lower bounded by $\Omega(n^{-\ep})$.
    In other words, the kernel interpolation generalizes poorly for a large class of kernels.
    As a direct corollary, we can show that overfitted wide neural networks defined on the sphere generalize poorly.
\end{abstract}

\keywords{Kernel interpolation \and Kernel ridge regression \and Generalization \and Neural network.}

 \section{Introduction}
  The `benign overfitting phenomenon' that over-parametrized neural networks can interpolate noisy data but also generalize well is widely observed in the literature on neural networks \citep{neyshabur2015_SearchReal,zhang2017_UnderstandingDeep,belkin2019_ReconcilingModern}.
  This observation challenges the traditional `bias-variance trade-off' doctrine in statistical learning theory \citep{vapnik1999_NatureStatistical} where models interpolating noisy data are supposed to generalize poorly.
  Since \citet{jacot2018_NeuralTangent} illustrated that when the width of neural network tends to infinity, the dynamics of the gradient flow when training the neural network can be approximated by the gradient flow of a kernel regression with respect to the time-invariant neural tangent kernel, we have experienced a renaissance of studies of kernel regression.
  For example, since a number of works
  ~\citep{du2018_GradientDescent,allen-zhu2019_ConvergenceTheory}
  have shown that a wide neural network can overfit any data, whether neural kernel interpolation can generalize might be one of the most important questions towards a theoretical explanation of the `benign overfitting phenomenon'.

  Suppose that we have observed the given samples $(x_1,y_1),\dots,(x_n,y_n)$ and perform regression on $\caH$,
  a reproducing kernel Hilbert space (RKHS) associated with some kernel $k$.
  Traditional results
  ~\citep{caponnetto2007_OptimalRates,andreaschristmann2008_SupportVector}
  in regularized kernel regression, namely \textit{kernel ridge regression}, showed that
  \begin{align}
    \label{eq:KRR}
    \hat{f}_{\lambda} &= \argmin_{f \in \caH} \left[
      \frac{1}{n}\sum_{i=1}^n \{y_i - f(x_i)\}^2 + \lambda \norm{f}_{\caH}^2
      \right],
  \end{align}
  where   $\norm{\cdot}_{\caH}$ is the corresponding norm, is minimax optimal if $\lambda$ is chosen properly according to the sample size;
  subsequent works
  ~\citep{steinwart2009_OptimalRates,fischer2020_SobolevNorm}
  have also extended the optimality results to more general settings.
  In the limiting case where $\lambda\to 0$, we obtain the kernel minimum-norm interpolation~\citep[Section 12.5]{wainwright2019_HighdimensionalStatistics}
  \begin{align}
    \label{eq:KMinNorm}
    \hat{f}_{\mathrm{inter}} = \hat{f}_{0} = \argmin_{f \in \caH} \norm{f}_{\caH} \qq{subject to} f(x_i) = y_i\quad (i = 1,\dots,n).
  \end{align}
  However, there has been little discussion about the generalization performance of $\hat{f}_{\mathrm{inter}}$.

    {
    Recently, many papers have considered the generalization ability of kernel regression in the high dimensional regime
    where the dimension $d$ of the inputs grows with the sample size $n$.
    In this regime, the so-called `benign overfitting phenomenon' has been reported and generalization error bounds are provided
    under certain additional assumptions.
    \citet{liang2020_JustInterpolate} proved that kernel interpolation can generalize when $n \asymp d$ and the data have a low-dimensional structure;
    \citet{liang2020_MultipleDescent} showed similar results when $n = d^{l}$ and $l > 1$ is not an integer.
    Considering a square-integrable regression function,
    \citet{ghorbani2020_LinearizedTwolayers} and its follow-up work \citet{mei2022_GeneralizationError}
    showed that kernel methods can fit at most a degree-$l$ polynomial if $n = d^l$ for $l > 1$;
    \citet{ghosh2021_ThreeStages} proved similar results for the gradient flow with early-stopping.
  }

    {
    On the other hand, under the traditional fixed dimension setting,
    the inconsistency of interpolation has been widely observed and believed~\citep{gyorfi2002_DistributionfreeTheory}.
  }
  Recently, several works claimed the inconsistency of kernel interpolation.
  \citet{rakhlin2018_ConsistencyInterpolation} showed that the kernel interpolation $\hat{f}_{\mathrm{inter}}$ with Laplace kernel is not consistent in odd dimensions;
  \citet{buchholz2022_KernelInterpolation} then extended the result to kernels associated with the Sobolev space $H^s$ for $d/2 < s < 3d/4$
  and all (fixed) dimensions;
  assuming that data are located in uniformly spaced grids,
  \citet{beaglehole2022_KernelRidgeless} established inconsistency results for a class of shift-invariant periodic kernels with mild spectral assumptions.
  Since the neural tangent kernels are not included in all the aforementioned situations,
  it is unclear if these works can help us understand the `benign overfitting phenomenon' in neural network.

  In this paper, we show that under mild assumptions,
  the kernel interpolation $\hat{f}_{\mathrm{inter}}$ generalizes poorly in fixed dimensions.
  To be precise, we prove that
  for any $\ep > 0$, the generalization error
  \begin{align*}
    \norm{\hat{f}_{\mathrm{inter}} - f^*}^2_{L^2} = \Omega(n^{-\ep})
  \end{align*}
  holds with high probability, where $f^*$ is the underlying regression function.
  In subsection \ref{subsec:examples of RKHS}, we further illustrate that our class of kernels not only include the aforementioned kernels but also include the neural tangent kernels.
  In particular, we rigorously show that overfitted wide neural networks on spheres generalize poorly.

  Throughout this paper, we use $L^p(\mathcal{X},\dd \mu)$ to represent the Lebesgue $L^p$ spaces,
  where the corresponding norm is denoted by $\norm{\cdot}_{L^p}$.
  For a function $f$ on $\caX$, we also define the sup-norm $\norm{f}_{\infty} = \sup_{x \in \caX} \abs{f(x)}$,
  which should be distinguished from the $L^\infty$-norm.
  We use the asymptotic notations $O(\cdot)$, $o(\cdot)$, and $\Omega(\cdot)$.
  We also denote $a_n \asymp b_n$ iff $a_n = O(b_n)$ and $a_n = \Omega(b_n)$.

  \section{Preliminaries}

  \subsection{Reproducing kernel Hilbert space}

  Let a compact set $\caX \subset \R^{d}$ be the input space.
  We fix a continuous positive definite kernel $k$ over $\caX$
  and denote by $\caH$ the separable reproducing kernel Hilbert space associated with $k$.
  Let $\rho$ be a probability measure on $\caX \times \R$ and $\mu$ be the marginal probability measure of $\rho$ on $\caX$.
  We denote by $L^2 = L^2(\caX,\dd \mu)$ the Lebesgue space for short.
  Then, it is known~\citep{andreaschristmann2008_SupportVector,steinwart2012_MercerTheorem} that
  the natural embedding $S_\mu : \caH \to L^{2}$ is a Hilbert-Schmidt operator with the Hilbert-Schmidt
  norm $\norm{S_\mu}_{\mathrm{HS}}^2 \leq \kappa^2 = \sup_{x \in \caX} k(x,x)$.
  Moreover, denoting by $S_\mu^* : L^2 \to \caH$ the adjoint operator of $S_\mu$,
  the operator $T = S_\mu S_\mu^* : L^2 \to L^2$
  is an integral operator given by
  \begin{align}
  (Tf)(x)
    = \int_{\mathcal{X}} k(x,y) f(y) \dd \mu(y).
  \end{align}
  Then, it is well known~\citep{caponnetto2007_OptimalRates,steinwart2012_MercerTheorem} that $T$ is self-adjoint, positive and trace-class (thus compact)
  with trace norm $\norm{T}_1 \leq \kappa^2$.
  Thus, the spectral theorem of compact self-adjoint operators and Mercer's theorem
  ~\citep{steinwart2012_MercerTheorem}
  yield that
  \begin{align}
    \label{eq:Mercer}
    T = \sum_{i\in N} \lambda_i \ang{\cdot,e_i}_{L^2} e_i,\quad\quad
    k(x,y) = \sum_{i\in N} \lambda_i e_i(x) e_i(y),
  \end{align}
  where $N \subseteq \mathbb{N}$ is an index set,
  $\left\{ \lambda_i \right\}_{i \in N}$ is the set of positive eigenvalues of $T$ in descending order,
  $e_i$ is the corresponding eigenfunction which is chosen to be continuous, and the convergence of the kernel expansion is absolute and uniform.
  In addition, $\left\{ e_i \right\}_{i \in N}$ forms an orthonormal basis
  of $\overline{\ran S_\mu} \subseteq L^2$
  and $\left\{ \lambda_i^{1/2} e_i \right\}_{i\in N}$ forms an orthonormal basis of
  $\overline{\ran S_\mu^*} \subseteq \caH$.

    {
    The eigenvalues $\lambda_i$ characterize the span of the reproducing kernel Hilbert space and the interplay between $\caH$ and $\mu$.
    We will assume $N = \mathbb{N}$ and impose the following polynomial eigenvalue decay condition:

    \begin{assumption}[Eigenvalue decay]
      \label{assu:EDR}
      There are some $\beta > 1$ and constants $c_1,c_2 > 0$ such that
      \begin{align}
        \label{eq:EDR}
        c_1 i^{- \beta} \leq \lambda_i \leq c_2 i^{-\beta} \quad (i = 1,2,\dots),
      \end{align}
      where $\lambda_i$ is the eigenvalue of $T$ defined in \cref{eq:Mercer}.
    \end{assumption}

    This assumption is quite standard in the literature
    ~\citep{caponnetto2007_OptimalRates,steinwart2009_OptimalRates,blanchard2018_OptimalRates,fischer2020_SobolevNorm,li2023_SaturationEffect}.
    We also remark that the upper bound in \cref{eq:EDR} enables us to provide upper bounds for the convergence rate,
    while the lower bound is essential for the minimax lower bound~\citep{caponnetto2007_OptimalRates} and also for our lower bound.
    This assumption is satisfied by commonly considered kernels including kernels associated with Sobolev RKHS (see \cref{example:Sobolev_RKHS})
    and neural tangent kernels on spheres~\citep{bietti2019_InductiveBias}.

  }

  \subsection{The embedding index $\alpha_{0}$ of an RKHS}\label{subsec:examples of RKHS}

  The decomposition \cref{eq:Mercer} also allows us to introduce the following interpolation spaces
  ~\citep{steinwart2012_MercerTheorem,fischer2020_SobolevNorm,li2023_SaturationEffect}.
  For $s \geq 0$, we define the fractional power $T^s : L^2 \to L^2$
  by
  \begin{align}
    T^s(f) &= \sum_{i\in N} \lambda_i^s \ang{f,e_i}_{L^2} e_i
  \end{align}
  and define the interpolation space $[\caH]^s$ by
  \begin{align}
  [\caH]^s =
    \ran T^{s/2} = \left\{ \sum_{i \in N} a_i \lambda_i^{s/2} e_i ~\Big|~ \sum_{i\in N} a_i^2 < \infty \right\}
    \subseteq L^2,
  \end{align}
  which is equipped with the norm 
  \begin{align}
    \label{eq:GammaNorm}
    \norm{\sum_{i \in N} a_i \lambda_i^{s/2} e_i}_{[\caH]^s} = \left(  \sum_{i\in N} a_i^2  \right)^{1/2}.
  \end{align}
  One may easily verify that $[\caH]^s$ is also a separable Hilbert space
  and $\left\{ \lambda_i^{s/2} e_i \right\}_{i \in N}$ forms an orthonormal basis of it.
  It is clear that we have $[\caH]^0 = \overline{\ran S_{\mu}} \subseteq L^2$ and
  $[\caH]^1 = \overline{\ran S_\mu^*} \subseteq \caH$.
  There are also compact inclusions $[\caH]^{s_1} \hookrightarrow [\caH]^{s_2}$ for $s_1 > s_2 \geq 0$.

  Furthermore, we say $\caH$ has an embedding property of order $\alpha \in (0,1]$
  if $[\mathcal{H}]^\alpha$ can be continuously embedded into $L^\infty(\caX,\dd \mu)$, that is,
  the operator norm
  \begin{equation}
    \label{eq:EMB}
    \norm{[\mathcal{H}]^\alpha \hookrightarrow L^{\infty}(\mathcal{X},\mu)} = M_\alpha < \infty.
  \end{equation}
  \citet[Theorem 9]{fischer2020_SobolevNorm} shows that
  \begin{align}
    \label{eq:EMB_And_InfNorm}
    \norm{[\mathcal{H}]^\alpha \hookrightarrow L^{\infty}(\mathcal{X},\mu)} = \norm{k^\alpha_{\mu}}_{L^\infty},
  \end{align}
  where $\norm{k^\alpha_{\mu}}_{L^\infty}$ is the $L^\infty$-norm of the $\alpha$-power of $k$ given by
  \begin{align}
    \label{eq:Kalpha_Inf}
    \norm{k^\alpha_{\mu}}_{L^\infty}^2 = \operatorname*{ess~sup}_{x \in \caX,~\mu} \sum_{i \in N} \lambda_i^{\alpha} e_i(x)^2.
  \end{align}

  Since we have assumed that $\sup_{x \in \caX}k(x,x) \leq \kappa^2$, we know that \cref{eq:EMB} holds for $\alpha = 1$, i.e., any $\caH$ associated with a bounded kernel function has the embedding property of order $1$.
  By the inclusion relation of interpolation spaces, it is clear that if $\caH$ has the embedding property of order $\alpha$, then it has the embedding 
  properties of order $\alpha'$ for any $\alpha'\geq\alpha$. Thus, we may introduce the following definition
  \begin{definition}
    The embedding index $\alpha_{0}$ of an RKHS $\caH$ is defined by
    \begin{align}
      \label{eq:EMB_Idx}
      \alpha_{0} = \inf\left\{ \alpha :  \norm{[\mathcal{H}]^\alpha \hookrightarrow L^{\infty}(\mathcal{X},\mu)} < \infty  \right\}.
    \end{align}
  \end{definition}

  It is shown in \citet{steinwart2009_OptimalRates} and \citet[Lemma 10]{fischer2020_SobolevNorm} that
  $\alpha_0 \geq \beta$ and the equality holds if the eigenfunctions are uniformly bounded ( i.e.
  $\sup_{i \in N} \norm{e_i}_{L^\infty} < \infty$).
  Throughout this paper, we make the following assumption.

  \begin{assumption}[Embedding index]
    \label{assu:EMB}
    The embedding index $\alpha_0 = 1/\beta$,
    where $\beta$ is the eigenvalue decay in \cref{eq:EDR}.
  \end{assumption}

  \begin{remark}
    \label{rem:EMB}
    Many RKHSs satisfy the embedding index condition. We list several examples below:

    \begin{example}[Sobolev spaces]
      \label{example:Sobolev_RKHS}
      Let $\caX \subset \R^d$ be a bounded domain with smooth boundary
      and suppose $\mu$ is induced by the Lebesgue measure.
      The Sobolev space $H^s(\caX)$ (for a definition, see e.g., \citet{adams2003_SobolevSpaces}) is a reproducing kernel Hilbert space if $s > d/2$.
      Moreover, it can be shown~\citep{fischer2020_SobolevNorm} that
      $H^s(\caX)$ satisfies \cref{assu:EDR} with $\beta= 2s/d$,
      and the interpolation space $[H^s(\caX)]^\alpha \cong H^{\alpha s}(\caX)$.
      By the Sobolev embedding theorem~\citep{adams2003_SobolevSpaces},
      if $\alpha s > d/2$, i.e., $\alpha > d/(2s) = \beta^{-1}$,
      we have
      \begin{align}
        \label{eq:SobolevEmb}
        H^{s\alpha} \hookrightarrow C^{0,\theta}(\caX) \hookrightarrow L^\infty(\caX,\dd \mu),
      \end{align}
      where $C^{0,\theta}(\caX)$ is the Hölder space and $0 < \theta < 2\alpha s / d$.
      It shows that \cref{assu:EMB} is satisfied.
    \end{example}
    \begin{example}[Translation invariant periodic kernel]
      Let $\caX = [-\pi,\pi)^d$ and $\mu$ be the uniform distribution on $\caX$.
      Suppose that the kernel satisfies $k(x,y) = f(x-y \bmod [-\pi,\pi)^d)$, where we denote $a \bmod [-\pi,\pi) = \left\{(a+\pi)\bmod 2\pi \right\} - \pi \in [-\pi,\pi)$ and $x-y \bmod [-\pi,\pi)^d$ is understood to be element-wise.
      Then, it is shown in \citet{beaglehole2022_KernelRidgeless} that
      the Fourier basis $\phi_m(x) = e^{i \ang{m,x}},~m \in \mathbb{Z}^d$ are eigenfunctions of $T$.
      Since $\{\phi_m\}$ are uniformly bounded, \cref{assu:EMB} is satisfied.
    \end{example}

    \begin{example}[Dot-product kernel on spheres]
      \label{example:DotProductKernel}
      Let $\caX = \bbS^{d-1} \subset \R^d$ be the unit sphere
      and assume that $\mu$ is the uniform distribution on $\bbS^{d-1}$.
      Let $k$ be a dot-product kernel, that is, $k(x,y) = f(\ang{x,y})$ for some function $f$.
      Then, it is well known~\citep{smola2000_RegularizationDotproduct} that $k$ can be decomposed using the spherical harmonics $Y_{n,l}$ by
      \begin{align}
        \label{eq:DotProductMercer}
        k(x,y) = \sum_{n=0}^{\infty} \mu_n \sum_{l=1}^{a_n} Y_{n,l}(x)Y_{n,l}(y),
      \end{align}
      where $a_n$ is the dimension of order-$n$ homogeneous harmonic polynomials and $\mu_n$ is an eigenvalue of $T$
      with multiplicity $a_n$.
      Assuming the polynomial eigenvalue decay $\mu_n \asymp n^{-(d-1)\beta}$,
      it is shown in \citet{zhang2023_OptimalityMisspecified} that the embedding index $\alpha_0 = 1/\beta$
      and thus \cref{assu:EMB} holds.
    \end{example}

  \end{remark}

  \subsection{Kernel ridge regression}
  Suppose that we have observed $n$ i.i.d. samples $(x_1,y_1),\dots,(x_n,y_n)$ from $\rho$.
  We denote by $X = (x_1,\dots,x_n)$ the sample inputs and
  $Y = (y_1,\dots,y_n)^\T$.
  Thanks to the representer theorem (see e.g., \citet{andreaschristmann2008_SupportVector}), we could solve the optimization problem \cref{eq:KRR} explicitly:
  \begin{align}
    \label{eq:KRR_Matrix}
    \hat{f}_{\lambda}(x)&= \K(x,X)\left\{ \K(X,X)+n \lambda \right\}^{-1}Y = \frac{1}{n}\K(x,X)(K+\lambda)^{-1}Y
  \end{align}
  where $\K(x,X) = \left( k(x,x_1),\dots,k(x,x_n) \right)$, $\K(X,X) = \big( k(x_i,x_j) \big)_{n\times n}$
  and $K = \K(X,X)/n$ is the normalized kernel matrix.
  Similarly, we also have an explicit formula for the kernel interpolation~\citep[Section 12.5]{wainwright2019_HighdimensionalStatistics}:
  \begin{align}
    \label{eq:KMinNorm_Matrix}
    \hat{f}_{\mathrm{inter}}(x)&=\K(x,X)\K(X,X)^{-1}Y = \frac{1}{n}\K(x,X)K^{-1}Y.
  \end{align}
  Here we notice that \cref{eq:KMinNorm_Matrix} is well-defined since we have assumed that $k$ is positive definite and thus $K$ is invertible.
  For conditions and examples of positive definite kernels, we refer to e.g., \citet{wendland2004_ScatteredData}.

  In the last two decades, the theories of integral operators and empirical processes are utilized to understand the generalization ability of $\hat f_{\lambda}(x)$ from various aspects (e.g., \citet{caponnetto2007_OptimalRates,fischer2020_SobolevNorm,li2023_SaturationEffect}).
  One of the essential steps in these works is an integral operator interpretation of the formula \cref{eq:KRR_Matrix}.
  More precisely, they introduced the sampling operator $K_{x} : \R \to \caH$ given by $K_x y = y k(x,\cdot)$, their adjoint operator
  $K_x^*: \caH \to \R$ given by $K_x^* f = f(x)$
  and the sample covariance operator $T_X : \caH \to \caH$ given by
  \begin{align}
    \label{eq:TX}
    T_X &= \frac{1}{n}\sum_{i=1}^n K_{x_i} K_{x_i}^*.
  \end{align}
  With these operators, there is an alternative expression of $\hat f_{\lambda}(x)$:
  \begin{align}
    \label{eq:KRR_integral}
    \hat{f}_\lambda = (T_X+\lambda)^{-1} g_Z,
  \end{align}
  where $g_Z = n^{-1} (\sum_{i=1}^n K_{x_i}y_i) \in \caH$.



  \section{Theoretical Results}

  \subsection{Main theorem}


  To state the main theorem, we need two more mild conditions on the kernel $k$ and the noise $y-f_{\rho}^{*}(x)$.


  \begin{assumption}
    \label{assu:Holder}
    The kernel $k$ is Hölder-continuous,
    that is, there exists some $s \in (0,1]$ and $L > 0$ such that
    \begin{align}
      \label{eq:Holder}
      \abs{k(x_1,x_2) - k(y_1,y_2)} \leq L \norm{(x_1,x_2) - (y_1,y_2)}_{\R^{d \times d}}^s, \quad
      \forall x_1,x_2,y_1,y_2 \in \caX.
    \end{align}
  \end{assumption}


  \begin{assumption}[Noise]
    \label{assu:noise}
    The conditional variance of the noise satisfies that
    \begin{align}
      E_{(x,y)\sim \rho} \left[ \left\{ y-f^{*}_{\rho}(x) \right\}^2 \mid x \right] \geq \sigma^2 > 0,
      \quad \mu\text{-a.e. } x \in \mathcal{X}.
    \end{align}
  \end{assumption}

  \cref{assu:Holder} requires that $k$ is Hölder continuous and \cref{assu:noise} simply requires that the noise is non-vanishing.
  It is clear that both these assumptions are quite mild.
  For example, \cref{assu:Holder} holds for the Laplacian kernels, RBF kernels, neural tangent kernels, etc.;
  \cref{assu:noise}  holds for the usual regression models $y=f^{*}(x)+\epsilon$ where $\epsilon$ is an independent non-zero noise.

  \begin{theorem}
    \label{thm:NoGeneralization}
    Suppose that the Assumptions~\ref{assu:EDR},\ref{assu:EMB},\ref{assu:Holder} and~\ref{assu:noise} hold.
    There is an absolute constant $c > 0$ such that
    for any $\ep > 0$ and any $\delta \in (0,1)$,
    when $n$ is sufficiently large (depending on $\ep$ and $\delta$),  one has that
    \begin{align*}
      E \left( \norm{\hat{f}_{\mathrm{inter}} - f^*_\rho}_{L^2}^2 \mid X \right)  \geq c \sigma^2 n^{-\ep}
    \end{align*}
    holds with probability at least $1-\delta$ with respect to the random samples.
  \end{theorem}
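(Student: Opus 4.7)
The plan is to condition on $X$, discard the non-negative bias so only the noise variance matters, recast that variance as a spectral functional of the sample covariance operator $T_X$, and exploit the sharper concentration of $T_X$ around $T$ made available by \cref{assu:EMB} to produce a lower bound of order $n^{-\epsilon}$.

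Writing $Y=F^*+\epsilon$ with $F^*=(f^*_\rho(x_i))_i$ and using the linearity of \cref{eq:KMinNorm_Matrix} in $Y$, the MSE conditional on $X$ splits into bias and variance; dropping the bias and applying \cref{assu:noise} gives
\begin{align*}
  E\bigl(\|\hat f_{\mathrm{inter}}-f^*_\rho\|^2_{L^2}\mid X\bigr) \;\geq\; \sigma^2\operatorname{tr}\bigl(\mathbb{K}(X,X)^{-1}G\,\mathbb{K}(X,X)^{-1}\bigr),
\end{align*}
where $G_{ij}=\int k(x,x_i)k(x,x_j)\dd\mu(x)$. Introducing the sampling operator $\phi:\R^n\to\caH$, $\phi(v)=\sum_i v_i k(\cdot,x_i)$, one checks $\mathbb{K}(X,X)=\phi^*\phi$ and $G=\phi^*T\phi$, and a short singular-value calculation turns the trace into
\begin{align*}
  \operatorname{tr}\bigl(\mathbb{K}(X,X)^{-1}G\,\mathbb{K}(X,X)^{-1}\bigr) \;=\; \frac{1}{n}\sum_{j=1}^n\hat\lambda_j^{-1}\langle Tu_j,u_j\rangle_{\caH},
\end{align*}
where $(\hat\lambda_j,u_j)_{j=1}^n$ are the nonzero eigen-pairs of $T_X=\tfrac{1}{n}\phi\phi^*$ with $\caH$-orthonormal $u_j$. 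Now truncate at $N=\lfloor n^{1-\epsilon}\rfloor$ and use $\langle Tu_j,u_j\rangle_{\caH}=\hat\lambda_j+\langle(T-T_X)u_j,u_j\rangle_{\caH}$ to obtain $\hat\lambda_j^{-1}\langle Tu_j,u_j\rangle_{\caH}\geq 1-\hat\lambda_j^{-1}|\langle(T-T_X)u_j,u_j\rangle_{\caH}|$; showing this correction is at most $1/2$ for every $j\leq N$ then yields the trace $\geq N/2$ and hence the variance $\geq\tfrac{\sigma^2}{2}n^{-\epsilon}$.

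Establishing this uniform bound for every $j$ up to $n^{1-\epsilon}$ is the main obstacle. The naive estimate $\|T_X-T\|_{\mathrm{op}}\lesssim n^{-1/2}$ only suffices for $j\lesssim n^{1/(2\beta)}$, so I would invoke a weighted Bernstein inequality of the type developed in \citet{fischer2020_SobolevNorm}: under \cref{assu:EMB}, for any $\alpha$ slightly above $1/\beta$ and any $\lambda>0$,
\begin{align*}
  \bigl\|(T+\lambda)^{-1/2}(T-T_X)(T+\lambda)^{-1/2}\bigr\|_{\mathrm{op}} \;\lesssim\; \sqrt{\tfrac{M_\alpha^2\lambda^{-\alpha}\log(1/\delta)}{n}} + \tfrac{M_\alpha^2\lambda^{-\alpha}\log(1/\delta)}{n}
\end{align*}
with probability at least $1-\delta$. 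Taking $\lambda\asymp\lambda_N\asymp n^{-\beta(1-\epsilon)}$ drives the right-hand side below any prescribed constant, after which the identity $|\langle(T-T_X)u_j,u_j\rangle_{\caH}|\leq\|(T+\lambda)^{-1/2}(T-T_X)(T+\lambda)^{-1/2}\|_{\mathrm{op}}(\langle Tu_j,u_j\rangle_{\caH}+\lambda)$ combined with the weighted eigenvalue comparison $\hat\lambda_j\gtrsim\lambda$ for $j\leq N$ (a standard consequence of the same weighted bound applied to $(T+\lambda)^{-1/2}T_X(T+\lambda)^{-1/2}$) delivers the pointwise control needed to close the argument. \cref{assu:Holder} is used only as the mild regularity input required by the underlying Bernstein step.
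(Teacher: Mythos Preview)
Your argument is correct and takes a genuinely different route from the paper's. The paper never works directly with $V(0)$: it uses the monotonicity $V(0)\geq V(\lambda)$ (\cref{prop:VarOrder}), rewrites $V(\lambda)$ in the empirical semi-norm form \cref{eq:VarAlterForm}, and then runs a two-step approximation $\norm{T_{X\lambda}^{-1}h_x}_{L^2,n}\approx\norm{T_\lambda^{-1}h_x}_{L^2,n}\approx\norm{T_\lambda^{-1}h_x}_{L^2}$, the second step requiring a sup-norm covering of the family $\{T_\lambda^{-1}h_x\}_{x\in\caX}$ (\cref{lem:ApproxB}, \cref{lem:CoveringRegularK}). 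By contrast, you diagonalize $V(0)$ directly as $\tfrac{\sigma^2}{n}\sum_j\hat\lambda_j^{-1}\ang{Tu_j,u_j}_{\caH}$, truncate at $N\asymp n^{1-\ep}$, and control each retained term via a single application of the weighted Bernstein inequality together with the min-max comparison $\hat\lambda_j\geq(1-\eta)\lambda_j-\eta\lambda$. Both proofs hinge on the same operator-Bernstein bound for $(T+\lambda)^{-1/2}(T-T_X)(T+\lambda)^{-1/2}$ with $\lambda$ pushed down to order $n^{-\beta+\epsilon}$ thanks to \cref{assu:EMB}; your route is more direct, while the paper's yields as a byproduct the two-sided estimate $V(\lambda)\asymp\sigma^2\lambda^{-1/\beta}/n$ (\cref{thm:VarLowerBound}), which may be of independent interest.

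One correction worth flagging: \cref{assu:Holder} is \emph{not} an input to the Bernstein step. In the paper it enters only through the covering-number bound \cref{eq:CoveringSupRegularK} needed for Approximation~B. Since your argument bypasses that covering step entirely, your proof in fact does not require \cref{assu:Holder} at all, which is a small strengthening of the result.
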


  \begin{remark}
    In \cref{thm:NoGeneralization}, since the sufficiently large $n$ depends on $\ep$ and $\delta$,
    the statement can not imply the inconsistency of kernel interpolation.
    Nevertheless, this theorem actually suggests that for a fairly large class of positive definite kernels, one can not expect that the kernel interpolation can generalize well.
  \end{remark}

  \subsection{Discussion}\label{subsec:Discussion}
  The `benign overfitting phenomenon' might be one of the most puzzling observations reported in the literature on neural networks.
  Since \citet{jacot2018_NeuralTangent} introduced the neural tangent kernel, it has become a common strategy to study the wide neural network through analyzing the kernel regression. Thus, the `benign overfitting phenomenon' raised a natural question that
  whether the kernel interpolations, especially the neural tangent kernel interpolation, can generalize well?


  There are few works~\citep{rakhlin2018_ConsistencyInterpolation,buchholz2022_KernelInterpolation,beaglehole2022_KernelRidgeless} showed that the kernel interpolation generalized poorly 
  in various settings.
  For example, \citet{rakhlin2018_ConsistencyInterpolation} showed that when the dimension $d$ is a fixed odd number,
  the Laplace kernel interpolation generalizes poorly for random observation with Rademacher noise;
  \citet{buchholz2022_KernelInterpolation} considered the regression with respect to the kernel associated with the Sobolev space $H^{s}$ and showed that the corresponding kernel interpolation generalizes poorly if $d/2<s<3d/4$; \citet{beaglehole2022_KernelRidgeless} further showed that the kernel interpolation with respect to the periodical translation invariant kernel generalizes poorly for the grid data.
  However, to the best of our knowledge, the poor generalization ability of the NTK interpolation has been only obtained for one dimensional grid data in \citet{lai2023_GeneralizationAbility}.

  \cref{thm:NoGeneralization} states that for any kernel $k$ satisfying that the embedding index $\alpha_{0}$ of the RKHS associated to $k$
  equals to $1/\beta$, the inverse of the eigen-decay rate, the corresponding kernel interpolation generalized poorly.
  Note that the listed examples in \cref{subsec:examples of RKHS} clearly contains all the kernels appeared in \citet{buchholz2022_KernelInterpolation,beaglehole2022_KernelRidgeless,rakhlin2018_ConsistencyInterpolation}. Thus, it is clear that our result is more general.
  Moreover, our result also applies to the neural tangent kernel (NTK) on spheres~\citep{bietti2019_InductiveBias,bietti2020_DeepEquals},
  which is not covered in either of the previous works.
  Consequently, we could assert that the neural tangent kernel interpolation and the overfitted wide neural network generalized poorly,
  a statement contradicts to the widely observed `benign overfitting phenomenon'.
  Thus, our result suggests that we may need to explain the `benign overfitting phenomenon' from other perspective.

  \subsection{Proof sketch}
  It is clear that the generalization error of kernel ridge regression (including the interpolation case of $\lambda = 0$) is lower bounded by
  the variance term $V(\lambda)$, i.e.,
  \begin{align*}
    E \left( \norm{\hat{f}_{\lambda} - f^*_\rho}^2_{L^2} \mid X \right)
    \geq V(\lambda) = \frac{\sigma^2}{n^2} \int_{\caX} \K(x,X)(K+\lambda)^{-2}\K(X,x) \dd \mu(x).
  \end{align*}

  Let us rewrite $V(\lambda)$ in the following operator form:
  \begin{align}
    \label{eq:3_V2}
    V(\lambda) =  \frac{\sigma^2}{n} \int_{\caX} \norm{(T_{X}+\lambda)^{-1}k(x,\cdot)}_{L^2,n}^2 \dd \mu(x),
  \end{align}
  where $\norm{f}_{L^2,n}^2 = n^{-1} \{\sum_{i=1}^n f(x_i)^2\}$.
  We then invoke an important observation appeared in \citet{li2023_SaturationEffect} which claims that
  for $\lambda = \Omega( n^{-1/2} )$,
  \begin{align}
    \label{eq:3_VApprox}
    V(\lambda) \approx \frac{\sigma^2}{n} \int_{\caX} \norm{(T+\lambda)^{-1}k(x,\cdot)}_{L^2,n}^2 \dd \mu(x)
    \approx \frac{\sigma^2}{n} \int_{\caX} \norm{(T+\lambda)^{-1}k(x,\cdot)}_{L^2}^2 \dd \mu(x)
    \asymp \frac{\sigma^2 \lambda^{-1/\beta}}{n}.
  \end{align}

  However, the requirement that $\lambda = \Omega( n^{-1/2} )$ is far from enough to show the nearly constant lower bound.
  As one of our major technical contributions in this paper,
  we sharpen the estimation by the embedding index assumption and prove the approximation
  \cref{eq:3_VApprox} actually holds for $\lambda \asymp n^{-\beta+\epsilon}$ for any $\epsilon>0$.
  Combining with the elementary observation $V(0) \geq V(\lambda)$, we get the desired claim.
  We refer to the supplementary material for a complete proof.

  \section{Application to neural networks}
  Suppose that we have observed $n$ i.i.d.\ samples $(x_1,y_1),\dots,(x_n,y_n)$ from $\rho$.
  For simplicity, we further assume that the marginal distribution $\mu$ of $\rho$ is the uniform distribution on the unit sphere $\mathbb{S}^{d-1}$.
  We use a two-layer neural network of width $m$ to perform the regression on $(x_{i},y_{i})$'s.
  More precisely,
  we consider the following two-layer neural network:
  \begin{align}
    f(x;\theta) = \sqrt{\frac{2}{m}}\sum_{r=1}^{m}a_r \sigma\left(w_r^\T x\right),
  \end{align}
  where $\sigma(x) = \max(x,0)$ is the ReLU activation and $\theta = (w_1,\dots,w_m,a_1,\dots,a_m)$ are the parameters.
  With $\theta$ randomly initialized as $\theta(0)$, we consider the training process given by the gradient flow
  $\dot{\theta}(t) = - \partial L / \partial \theta $,
  where the loss function is
  \begin{align}
    L(\theta) = \frac{1}{2n}\sum_{i=1}^n \left\{ y_i - f(x_i;\theta) \right\}^2.
  \end{align}
  Then, when the network is over-parametrized, namely $m$ is large, the theory of lazy training (see e.g., \citet{lai2023_GeneralizationAbility,lee2019_WideNeural}) shows that the trained network
  $\hat{f}^{\mathrm{NN}}_t(x) = f(x;\theta(t))$ can be approximated by a kernel gradient method with respect the following neural tangent kernel
  \begin{align}
    k_{\mathrm{NT}}(x,y) = \frac{2}{\pi}\left(\pi-\arccos\ang{x,y}\right)\ang{x,y} +\frac{1}{\pi}(1 - \ang{x,y}^2)^{1/2}.
  \end{align}
  Particularly, as $t \to \infty$, the network output can be approximated by kernel interpolation.
  The following result is a corollary of
  \citet[Proposition 3.2]{lai2023_GeneralizationAbility}.
  \begin{proposition}
    \label{prop:NNConverge}
    Suppose that we initialize the neural network symmetrically, i.e., the width $m = 2l$ and independently initialize $a_r(0),~w_r(0) \sim N(0,1)$ and $a_{l+r}(0) = -a_{r}(0)$, $w_{l+r}(0) = w_r(0)$ for $r = 1,\dots,l$.
    Then, for any $\delta \in (0,1)$, when $m$ is sufficiently large, with probability at least $1-\delta$ one has
    \begin{align}
      \limsup_{t \to \infty} \sup_{x \in \mathbb{S}^{d-1}}\abs{\hat{f}^{\mathrm{NN}}_t(x)  - \hat{f}_{\mathrm{inter}}^{\mathrm{NTK}}(x)} = o_m(1),
    \end{align}
    where $\hat{f}_{\mathrm{inter}}^{\mathrm{NTK}}$ is the kernel interpolation with respect to $k_{\mathrm{NT}}$.
  \end{proposition}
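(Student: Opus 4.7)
The strategy is to present Proposition 4.1 as a direct consequence of the uniform-in-time NTK coupling in \citet{lai2023_GeneralizationAbility}, combined with two elementary ingredients: the symmetric-initialization trick, which makes the network output vanish at $t=0$, and the exponential convergence of the associated kernel gradient flow, starting from zero, to the kernel interpolant. The main obstacle lies not in the present statement but in the cited coupling result, which handles the delicate task of extending the usual NTK approximation from compact time windows to all $t\in[0,\infty)$; once that is available, only bookkeeping remains.

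The first step is to verify that $f(\cdot;\theta(0))\equiv 0$ on $\bbS^{d-1}$. Under the prescribed symmetric initialization, for each $r\le l$ the $r$-th and $(l+r)$-th summands cancel exactly because $a_{l+r}(0)\sigma(w_{l+r}(0)^{\T}x) = -a_r(0)\sigma(w_r(0)^{\T}x)$, so the initial network is identically zero. Consequently the coupled kernel gradient flow $\hat f^{\mathrm{NTK}}_t$ on the quadratic training loss also starts from the zero function, and a standard closed-form computation gives
\begin{align*}
  \hat f^{\mathrm{NTK}}_t(x) \;=\; \K(x,X)\bigl(I - e^{-t\K(X,X)/n}\bigr)\K(X,X)^{-1}Y,
\end{align*}
where $\K$ is formed from $k_{\mathrm{NT}}$. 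Since $k_{\mathrm{NT}}$ is strictly positive definite on $\bbS^{d-1}$, the Gram matrix $\K(X,X)$ is positive definite and the matrix exponential decays to $0$ at a rate governed by its smallest eigenvalue. Combined with the boundedness of $\K(x,X)$ on the sphere and the closed-form expression \cref{eq:KMinNorm_Matrix} for $\hat f_{\mathrm{inter}}^{\mathrm{NTK}}$, this yields the uniform limit
\begin{align*}
  \sup_{x\in\bbS^{d-1}}\bigl|\hat f^{\mathrm{NTK}}_t(x) - \hat f_{\mathrm{inter}}^{\mathrm{NTK}}(x)\bigr| \;\longrightarrow\; 0 \qquad (t\to\infty).
\end{align*}

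The second step invokes \citet[Proposition 3.2]{lai2023_GeneralizationAbility}, which supplies the uniform-in-$t$ and $x$ coupling
\begin{align*}
  \sup_{t\ge 0}\sup_{x\in\bbS^{d-1}}\bigl|\hat f^{\mathrm{NN}}_t(x) - \hat f^{\mathrm{NTK}}_t(x)\bigr| \;=\; o_m(1)
\end{align*}
with probability at least $1-\delta$ over the random initialization, provided $m$ is sufficiently large. A triangle inequality combining this bound with the uniform limit from the previous step gives $\limsup_{t\to\infty}\sup_{x\in\bbS^{d-1}}|\hat f^{\mathrm{NN}}_t(x) - \hat f_{\mathrm{inter}}^{\mathrm{NTK}}(x)| = o_m(1)$, which is exactly the claim. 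As noted above, the only technically substantive piece is the global-in-time NN-to-NTK coupling; all other ingredients are elementary closed-form computations, so the corollary follows with little additional effort.
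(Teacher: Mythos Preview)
Your proposal is correct and matches the paper's treatment: the paper does not give an independent proof but simply declares the proposition a corollary of \citet[Proposition 3.2]{lai2023_GeneralizationAbility}, and you have spelled out precisely the bookkeeping (zero initial output from symmetry, closed-form kernel gradient flow converging uniformly to the interpolant, then the triangle inequality against the cited uniform-in-time coupling) that makes this derivation go through.
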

  Moreover, since $k_{\mathrm{NT}}$ is a dot-product kernel satisfying a polynomial eigenvalue decay of $\beta = d/(d-1)$
  ~\citep{bietti2019_InductiveBias},
  our assumptions are satisfied as discussed in \cref{example:DotProductKernel}.

  \begin{corollary}
    There is a constant $c > 0$ such that
    for any $\ep > 0$ and $\delta \in (0,1)$,
    when $n$ and $m$ are sufficiently large, one has that
    \begin{align*}
      E \left(\liminf_{t \to \infty} \norm{\hat{f}^{\mathrm{NN}}_t(x) - f^*_\rho}_{L^2}^2 \mid X \right)
      \geq c \sigma^2 n^{-\ep}
    \end{align*}
    holds with probability at least $1-\delta$ with respect to the random samples and initialization.
  \end{corollary}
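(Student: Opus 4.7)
The plan is to deduce the corollary by combining \cref{thm:NoGeneralization}, applied to the neural tangent kernel $k_{\mathrm{NT}}$, with the uniform approximation from \cref{prop:NNConverge}. First, I would verify the hypotheses of \cref{thm:NoGeneralization} for $k_{\mathrm{NT}}$: as already indicated in the paper, $k_{\mathrm{NT}}$ is a dot-product kernel on $\bbS^{d-1}$ whose spherical-harmonics eigenvalues decay polynomially with rate $\beta = d/(d-1)$ by \citet{bietti2019_InductiveBias}, so \cref{assu:EDR} holds, and by \cref{example:DotProductKernel} the embedding index equals $1/\beta$, so \cref{assu:EMB} holds; the explicit expression for $k_{\mathrm{NT}}$ in terms of $\ang{x,y}$ shows it is Lipschitz on $\bbS^{d-1}\times \bbS^{d-1}$, giving \cref{assu:Holder} with $s=1$; \cref{assu:noise} is part of the hypotheses. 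Applying \cref{thm:NoGeneralization} with tolerance $\ep/2$, I obtain an absolute constant $c_0 > 0$ such that for $n$ sufficiently large (depending on $\ep$ and $\delta$), with probability at least $1-\delta/2$ over the random samples,
\[
  E\!\left( \norm{\hat{f}_{\mathrm{inter}}^{\mathrm{NTK}} - f^*_\rho}_{L^2}^2 \mid X \right) \geq c_0 \sigma^2 n^{-\ep/2}.
\]

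Next, I would transfer this lower bound to the trained neural network via a reverse triangle inequality. By \cref{prop:NNConverge}, for a prescribed $\eta > 0$ and $m$ sufficiently large (depending on $\eta$ and $\delta$), with probability at least $1-\delta/2$ over the initialization one has $\limsup_{t \to \infty} \norm{\hat{f}^{\mathrm{NN}}_t - \hat{f}_{\mathrm{inter}}^{\mathrm{NTK}}}_{L^2} \leq \eta$, using that $\mu$ is a probability measure on $\bbS^{d-1}$ so that $L^2(\mu)$-norms are dominated by sup-norms. Combining this with the elementary inequality $(a-b)^2 \geq \tfrac{1}{2}a^2 - b^2$ yields, on the good initialization event,
\[
  \liminf_{t \to \infty} \norm{\hat{f}^{\mathrm{NN}}_t - f^*_\rho}_{L^2}^2 \geq \tfrac{1}{2} \norm{\hat{f}_{\mathrm{inter}}^{\mathrm{NTK}} - f^*_\rho}_{L^2}^2 - \eta^2.
\]
Taking $E(\cdot\mid X)$ over the noise (using that the initialization is independent of $(X,Y)$ so the inequality is preserved under this conditional expectation on the joint good event) and choosing $\eta^2 = \tfrac{1}{4} c_0 \sigma^2 n^{-\ep/2}$ gives a conditional lower bound of at least $\tfrac{1}{4} c_0 \sigma^2 n^{-\ep/2}$, which in turn exceeds $c \sigma^2 n^{-\ep}$ for $n$ large and some absolute $c > 0$. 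A union bound over the two good events yields the stated probability at least $1-\delta$.

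The main obstacle will be the careful bookkeeping of the quantifier order between $n$ and $m$: the width required by \cref{prop:NNConverge} grows with the desired approximation accuracy $\eta$, and through the choice $\eta^2 \asymp n^{-\ep/2}$ one is forcing $m$ to grow with $n$. One must therefore check that a single choice of $m$ works uniformly on an $(X,Y)$-event of probability $\geq 1-\delta/2$, which amounts to inspecting the quantitative dependence of the $o_m(1)$ bound in \citet[Proposition 3.2]{lai2023_GeneralizationAbility} on the sample realization (e.g., on the smallest eigenvalue of the empirical NTK matrix and on $\norm{Y}$). Apart from this layering of probabilities, the argument is essentially a reverse triangle inequality and requires no new analytic input beyond \cref{thm:NoGeneralization} and \cref{prop:NNConverge}.
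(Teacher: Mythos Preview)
Your proposal is correct and follows precisely the route the paper intends: the paper gives no explicit proof of this corollary, simply noting that $k_{\mathrm{NT}}$ satisfies the hypotheses of \cref{thm:NoGeneralization} (via \cref{example:DotProductKernel} and the eigenvalue decay from \citet{bietti2019_InductiveBias}) and invoking \cref{prop:NNConverge}, so your combination of the theorem with a reverse triangle inequality against the uniform approximation is exactly the implied argument. Two minor remarks: the detour through $\ep/2$ is unnecessary, since applying \cref{thm:NoGeneralization} directly with parameter $\ep$ and choosing $\eta^2 = \tfrac{1}{4}c_0\sigma^2 n^{-\ep}$ already yields the claim with $c = c_0/4$; and your caution about the Lipschitz claim is warranted---as a function of $t=\ang{x,y}$ the NTK is only $\tfrac{1}{2}$-H\"older near $t=\pm 1$, though on $\bbS^{d-1}\times\bbS^{d-1}$ the square-root singularity is absorbed by $1\mp t \asymp \|x\mp y\|^2$, so Lipschitz continuity in $(x,y)$ does hold (and in any case H\"older suffices for \cref{assu:Holder}).
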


  To the best of our knowledge, such a lower bound for wide neural network has only been proven in \citet{lai2023_GeneralizationAbility} for one-dimensional grid data.
  Our result claims that overfitted wide neural network generalizes poorly on the sphere $\mathbb S^{d-1}$,
  This result shows that overfitted neural networks on the sphere generalize poorly, which also suggests more efforts are needed to understand the mystery of generalization of over-parametrized neural networks.

  \section*{Acknowledgement}
  Authors Yicheng Li and Haobo Zhang contributed equally to this work.
  This work is supported in part by the National Natural Science Foundation of China (Grant 11971257)
  and
  Beijing Natural Science Foundation (Grant Z190001).

\bibliographystyle{kp}  
\bibliography{references}  

\appendix

  \section{Proofs}
  \label{sec:Proof}

  \subsection{More preliminaries}\label{subsec:More_Prelim}
  In the following, we will denote $h_x = k(x,\cdot)$ for $x \in \caX$.
  For a function $f$, we use $f[X]=  (f(x_1),\dots,f(x_n))^\T$.
  To simplify the notation, we will use $c,C$ to represent constants, which may change in the context;
  we will also abbreviate
  \begin{align*}
    T_{\lambda} = T+\lambda,\quad  T_{X\lambda} = T_X + \lambda.
  \end{align*}

  Following \citet{li2023_SaturationEffect}, we consider the sample subspace
  \begin{align}
    \mathcal{H}_n = \spn \left\{ k(x_1,\cdot),\dots,k(x_n,\cdot) \right\} \subset \caH.
  \end{align}
  Then, it is easy to verify that $\ran T_X = \caH_{n}$ and $K$ is the representation matrix of $T_X$
  under the natural basis $\left\{ k(x_1,\cdot),\dots,k(x_n,\cdot) \right\}$.
  Consequently, for any continuous function $\varphi$ we have
  \begin{align}
    \label{eq:TXMatrixForm}
    \varphi(T_X)\K(X,\cdot) = \varphi(K) \K(X,\cdot).
  \end{align}
  where the left-hand side is understood by applying the operator elementwise.
  Since from the property of reproducing kernel Hilbert space we have $\ang{h_x,f}_{\caH} = f(x)$ for $f \in \caH$,
  taking inner product elementwise between \cref{eq:TXMatrixForm} and $f$, we obtain
  \begin{align}
    \label{eq:TXActionF}
    (\varphi(T_X) f)[X] = \varphi(K)f[X].
  \end{align}

  Moreover, for $f,g \in \caH$, we define empirical semi-inner product
  \begin{align}
    \label{eq:SampleInnerProductL2}
    \ang{f,g}_{L^2,n} & = \frac{1}{n}\sum_{i=1}^n f(x_i)g(x_i) = \frac{1}{n}f[X]^\T g[X],
  \end{align}
  and denote by $\norm{\cdot}_{L^2,n}^2$ the corresponding empirical semi-norm.
  Then, \citet{li2023_SaturationEffect} established the following simple but important connection.

  \begin{proposition}
    For $f,g \in \caH$, we have
    \begin{equation}
      \label{eq:SampleInnerProductsRelation}
      \ang{f,g}_{L^2,n} = \ang{T_X f,g}_{\caH} = \ang{T_X^{1/2} f, T_X^{1/2} g}_{\caH}.
    \end{equation}
  \end{proposition}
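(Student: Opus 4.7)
My plan is to verify the two equalities by directly unfolding the definition of $T_X$ and then applying the reproducing property of $\caH$. First, I will use the defining relations $K_x^{*}f = f(x)$ and $K_x r = r\,k(x,\cdot)$ to expand, for any $f \in \caH$,
\[
T_X f = \frac{1}{n}\sum_{i=1}^{n} K_{x_i} K_{x_i}^{*} f = \frac{1}{n}\sum_{i=1}^{n} f(x_i)\,k(x_i,\cdot).
\]
I will then take the $\caH$-inner product of this identity with $g\in\caH$ and apply the reproducing property $\ang{k(x,\cdot),g}_{\caH}=g(x)$ termwise, which yields
\[
\ang{T_X f, g}_{\caH} = \frac{1}{n}\sum_{i=1}^{n} f(x_i)\,\ang{k(x_i,\cdot),g}_{\caH} = \frac{1}{n}\sum_{i=1}^{n} f(x_i)\,g(x_i) = \ang{f,g}_{L^2,n},
\]
which establishes the first equality in \cref{eq:SampleInnerProductsRelation}. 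Note that this computation also matches the matrix-level identity \cref{eq:TXActionF} in the special case $\varphi(t)=t$.

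For the second equality I will exploit the fact that each rank-one summand $K_{x_i}K_{x_i}^{*}$ is self-adjoint and positive on $\caH$ (indeed $\ang{K_{x_i}K_{x_i}^{*}f,f}_{\caH} = f(x_i)^{2}\geq 0$), so the finite average $T_X$ is itself bounded, self-adjoint and positive. Its principal square root $T_X^{1/2}$ therefore exists and is self-adjoint, whence
\[
\ang{T_X f, g}_{\caH} = \ang{T_X^{1/2} T_X^{1/2} f, g}_{\caH} = \ang{T_X^{1/2} f, T_X^{1/2} g}_{\caH}.
\]
I do not foresee a genuine obstacle here: the proposition is a short, elementary consequence of combining the definition of $T_X$ as an average of sampling operators with the reproducing property, and the only thing to be slightly careful about is justifying the existence and self-adjointness of $T_X^{1/2}$ on $\caH$, which is immediate since $T_X$ is a finite-rank positive self-adjoint operator. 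It is precisely this identity that will allow $T_X$ to bridge the empirical seminorm $\norm{\cdot}_{L^2,n}$ and the $\caH$-geometry in the subsequent arguments.
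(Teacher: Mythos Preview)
Your proof is correct and follows essentially the same approach as the paper: expand $T_X f$ using the sampling operators, apply the reproducing property termwise to obtain the first equality, and then invoke the self-adjointness of $T_X^{1/2}$ for the second. Your added justification for the existence and self-adjointness of $T_X^{1/2}$ is a welcome bit of extra care but does not change the argument.
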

  \begin{proof}
    Notice that $T_X f = \frac{1}{n}\sum_{i=1}^n f(x_i) h_{x_i}$, and thus
    \begin{align*}
      \ang{T_X f,g}_{\caH} = \frac{1}{n}\sum_{i=1}^n f(x_i) \ang{h_{x_i}, g}_{\caH}
      = \frac{1}{n}\sum_{i=1}^n f(x_i) g(x_i) = \ang{f,g}_{L^2,n}.
    \end{align*}
    The second inequality comes from the definition of $T_X^{1/2}$.
  \end{proof}


  \subsection{The variance term}

  The first step of the proof is to lower bound the generalization error by
  the variance term $V(\lambda)$, whose explicit expression can be derived.
  This step is elementary and quite standard in the literature, so we omit the proof here.

  \begin{proposition}
    \label{prop:VarianceTerm}
    Under \cref{assu:noise}, for any $\lambda \geq 0$, we have
    \begin{align}
      \label{eq:Pf_V}
      E \left(\norm{\hat{f}_{\lambda} - f^*_\rho}^2_{L^2} \mid X \right)
      \geq V(\lambda) = \frac{\sigma^2}{n^2} \int_{\caX} \K(x,X)(K+\lambda)^{-2}\K(X,x) \dd \mu(x).
    \end{align}
  \end{proposition}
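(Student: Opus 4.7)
The plan is a standard bias--variance decomposition of the kernel ridge regression predictor, followed by dropping the (non-negative) bias contribution and exploiting the lower bound on the conditional noise variance supplied by \cref{assu:noise}. Write $Y = f^*_\rho[X] + \varepsilon$, where $\varepsilon = Y - f^*_\rho[X] \in \R^n$ satisfies $E(\varepsilon \mid X) = 0$ componentwise and, by the i.i.d.\ assumption together with \cref{assu:noise}, the conditional covariance satisfies $\mathrm{Cov}(\varepsilon \mid X) \succeq \sigma^2 I_n$ (it is diagonal with diagonal entries at least $\sigma^2$).

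Next, I would use the closed form \cref{eq:KRR_Matrix} to split the predictor into a $X$-measurable piece and a noise piece:
\begin{align*}
\hat f_\lambda(x) - f^*_\rho(x) = \underbrace{\Big(\tfrac{1}{n}\K(x,X)(K+\lambda)^{-1} f^*_\rho[X] - f^*_\rho(x)\Big)}_{\text{bias}(x)} + \underbrace{\tfrac{1}{n}\K(x,X)(K+\lambda)^{-1}\varepsilon}_{\text{noise part}}.
\end{align*}
Given $X$, the bias term is deterministic while the noise part has conditional mean zero, so the cross term vanishes after taking $E(\cdot \mid X)$. Consequently, for each fixed $x$,
\begin{align*}
E\!\left[ (\hat f_\lambda(x) - f^*_\rho(x))^2 \;\big|\; X \right] = \mathrm{bias}(x)^2 + \tfrac{1}{n^2}\K(x,X)(K+\lambda)^{-1}\mathrm{Cov}(\varepsilon\mid X)(K+\lambda)^{-1}\K(X,x).
\end{align*}
Since $(K+\lambda)^{-1}$ is symmetric positive semi-definite and $\mathrm{Cov}(\varepsilon\mid X) \succeq \sigma^2 I_n$, the quadratic form in the variance piece is lower bounded by $\sigma^2 \K(x,X)(K+\lambda)^{-2}\K(X,x)$.

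Finally, I would integrate over $x \sim \mu$, apply Fubini to swap the $L^2(\mu)$ integral with the conditional expectation (justified since $k$ is continuous on the compact $\caX$ and hence bounded, making all integrands uniformly bounded in $x$), and drop the non-negative integrated bias term. This yields
\begin{align*}
E\!\left( \norm{\hat f_\lambda - f^*_\rho}_{L^2}^2 \;\big|\; X \right) \geq \frac{\sigma^2}{n^2}\int_{\caX}\K(x,X)(K+\lambda)^{-2}\K(X,x)\,\dd\mu(x) = V(\lambda),
\end{align*}
which is exactly the claim. There is no real obstacle here: the only point worth checking carefully is that the noise covariance is truly $\succeq \sigma^2 I_n$ in the matrix sense, which follows because i.i.d.\ sampling makes $\mathrm{Cov}(\varepsilon \mid X)$ diagonal, and \cref{assu:noise} then lower bounds each diagonal entry by $\sigma^2$ almost surely. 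The argument is elementary, which is consistent with the paper's remark that the proof is omitted because the step is standard.
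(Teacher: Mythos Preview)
Your proposal is correct and follows essentially the same route as the paper's own proof: decompose $\hat f_\lambda - f^*_\rho$ via \cref{eq:KRR_Matrix} into an $X$-measurable bias term plus a mean-zero noise term, swap the $\mu$-integral with the conditional expectation, drop the nonnegative bias contribution, and use $E(\varepsilon\varepsilon^\T \mid X)\succeq \sigma^2 I_n$ (from independence of the samples together with \cref{assu:noise}) to lower-bound the quadratic form. The only cosmetic difference is that you spell out explicitly why the conditional covariance is diagonal and dominated below by $\sigma^2 I_n$, which the paper leaves implicit.
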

  \begin{proof}
    Let us denote $K_{\lambda} = K+\lambda$.
    Recalling \cref{eq:KRR_Matrix}, we have
    \begin{align*}
      \hat{f}_{\lambda} - f^*_\rho
      &= \frac{1}{n} \K(x,X)K_\lambda^{-1} Y\\
      &= \frac{1}{n} \K(x,X)K_\lambda^{-1}\left( f^*_\rho[X] + \bm{\epsilon} \right) - f^*_\rho(x)\\
      &= \frac{1}{n} \K(x,X)K_\lambda^{-1}\bm{\epsilon} + \K(x,X)K_\lambda^{-1} f^*_\rho[X]- f^*_\rho(x),
    \end{align*}
    where $\bm{\epsilon}$ is the column vector of the noises $\epsilon_i = y_i - f^*_\rho(x_i)$.
    Since $\epsilon_i | X$ are independent with mean zero and variance $\sigma^2_{x_i} \geq \sigma^2$,
    simple calculation shows
    \begin{align*}
      E \left( \norm{\hat{f}_{\lambda} - f^*_\rho}^2_{L^2} \mid X \right)
      & = E \left[ \int_{\caX} \left\{ \hat{f}_{\lambda}(x) - f^*_\rho(x) \right\}^2 \dd \mu(x) \mid X \right] \\
      & =  \int_{\caX}E \left[ \left\{ \hat{f}_{\lambda}(x) - f^*_\rho(x) \right\}^2 \mid X \right] \dd \mu(x)  \\
      & \geq \int_{\caX} \mathrm{var}\left\{ \hat{f}_{\lambda}(x)  \mid X \right\}\dd \mu(x) \\
      & = \frac{1}{n^2} \int_{\caX}  \K(x,X)K_\lambda^{-1}
      E \left( \bm{\epsilon}\bm{\epsilon}^\T \mid X \right) K_\lambda^{-1} \K(X,x)\dd \mu(x) \\
      & \geq \frac{\sigma^2}{n^2} \int_{\caX} \K(x,X)K_\lambda^{-2}\K(X,x)\dd \mu(x).
    \end{align*}
  \end{proof}

  The simple but critical observation based on the matrix form in \cref{eq:Pf_V} is that
  \begin{align*}
  (K+\lambda_1)
    ^{-2} \succeq  (K+\lambda_2)^{-2} \qif \lambda_1 \leq \lambda_2,
  \end{align*}
  where $\succeq$ represents the partial order of positive semi-definite matrices,
  and thus we have the following proposition~\citep{li2023_SaturationEffect}.

  \begin{proposition}
    \label{prop:VarOrder}
    For $\lambda_1 \geq \lambda_2 \geq 0$, we have $V(\lambda_1) \leq V(\lambda_2)$.
    Particularly, for any $\lambda \geq 0$,
    \begin{align*}
      V(0) \geq V(\lambda).
    \end{align*}
  \end{proposition}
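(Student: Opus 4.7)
The plan is to reduce the claim to the pointwise positive semidefinite ordering that is already hinted at in the paragraph immediately preceding the proposition, and then integrate. The starting point is the explicit formula
\begin{align*}
V(\lambda) = \frac{\sigma^2}{n^2} \int_{\caX} \K(x,X)(K+\lambda)^{-2}\K(X,x) \dd \mu(x)
\end{align*}
from \cref{prop:VarianceTerm}. Since $K$ is symmetric positive definite, $(K+\lambda)^{-2}$ is a well-defined symmetric positive definite matrix for every $\lambda \geq 0$.

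First I would establish the matrix inequality $(K+\lambda_1)^{-2} \preceq (K+\lambda_2)^{-2}$ whenever $\lambda_1 \geq \lambda_2 \geq 0$. Because $(K+\lambda_1)^{-2}$ and $(K+\lambda_2)^{-2}$ are both polynomials (rational functions) of the single matrix $K$, they are simultaneously diagonalizable: writing $K = U \Lambda U^{\T}$ with $\Lambda = \operatorname{diag}(\mu_1,\dots,\mu_n)$ and $\mu_i > 0$, one has
\begin{align*}
(K+\lambda)^{-2} = U\, \operatorname{diag}\bigl((\mu_i+\lambda)^{-2}\bigr)\, U^{\T}.
\end{align*}
The scalar function $\lambda \mapsto (\mu_i+\lambda)^{-2}$ is decreasing on $[0,\infty)$ for each $\mu_i > 0$, which gives the desired PSD ordering term by term after conjugation by $U$.

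Next, I would apply this PSD ordering pointwise in $x$ to the quadratic form with vector $v = \K(X,x) \in \R^n$, yielding
\begin{align*}
\K(x,X)(K+\lambda_1)^{-2}\K(X,x) \leq \K(x,X)(K+\lambda_2)^{-2}\K(X,x)
\end{align*}
for every $x \in \caX$. Integrating both sides against $\mu$ and multiplying by $\sigma^2/n^2$ gives $V(\lambda_1) \leq V(\lambda_2)$, which is the first claim. The second claim follows by taking $\lambda_1 = \lambda$ and $\lambda_2 = 0$.

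I do not foresee any real obstacle: the only substantive ingredient is the elementary fact that for commuting positive operators the PSD order is preserved by any coordinate-wise decreasing scalar function, which is immediate from simultaneous diagonalization. Measurability and finiteness of the integrand are automatic since $x \mapsto \K(x,X)$ is continuous on the compact set $\caX$.
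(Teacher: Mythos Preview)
Your proposal is correct and follows exactly the approach the paper indicates: the paper states the PSD ordering $(K+\lambda_1)^{-2} \succeq (K+\lambda_2)^{-2}$ for $\lambda_1 \leq \lambda_2$ as the sole ingredient and then asserts the proposition without further detail, so your simultaneous-diagonalization justification and subsequent integration simply fill in the elementary details of that same argument.
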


  This proposition allows us to consider the variance term of slightly bigger $\lambda$'s,
  where concentration results can be established.
  However, it is still difficult to directly analyze the matrix form.
  The key is to rewrite the matrix form into the operator form using empirical semi-norm introduced in
  \citet{li2023_SaturationEffect}.

  \begin{lemma}
    The variance term in \cref{eq:Pf_V} satisfies
    \begin{align}
      \label{eq:VarAlterForm}
      V(\lambda) = \frac{\sigma^2}{n} \int_{\caX} \norm{(T_{X}+\lambda)^{-1}k(x,\cdot)}_{L^2,n}^2 \dd \mu(x)
    \end{align}
  \end{lemma}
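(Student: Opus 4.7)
The plan is to convert the matrix-form expression of $V(\lambda)$ in \cref{eq:Pf_V} into the operator form in \cref{eq:VarAlterForm} by interpreting $\K(X,x)$ as the sampling vector $h_x[X]$ with $h_x = k(x,\cdot)$, and then applying the identity $(\varphi(T_X) f)[X] = \varphi(K) f[X]$ from \cref{eq:TXActionF} with the choice $\varphi(t) = (t+\lambda)^{-1}$. This is purely an algebraic rewriting; no probabilistic argument is needed.

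Concretely, I would first observe that $\K(x,X) = h_x[X]^\T$ and $\K(X,x) = h_x[X]$, and split $(K+\lambda)^{-2} = (K+\lambda)^{-1}(K+\lambda)^{-1}$ using symmetry of $K+\lambda$, so that the integrand in \cref{eq:Pf_V} becomes
\begin{align*}
\K(x,X)(K+\lambda)^{-2}\K(X,x) = \bigl[(K+\lambda)^{-1} h_x[X]\bigr]^\T \bigl[(K+\lambda)^{-1} h_x[X]\bigr].
\end{align*}
Next, setting $g_x := (T_X+\lambda)^{-1} h_x \in \caH$, the identity \cref{eq:TXActionF} applied to $f = h_x$ and $\varphi(t) = (t+\lambda)^{-1}$ gives $(K+\lambda)^{-1} h_x[X] = g_x[X]$. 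Hence the integrand equals $g_x[X]^\T g_x[X] = n \norm{g_x}_{L^2,n}^2$ by the definition of the empirical semi-norm in \cref{eq:SampleInnerProductL2}. Plugging this back into \cref{eq:Pf_V} cancels one factor of $n$ and yields exactly \cref{eq:VarAlterForm}.

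The only point requiring care is justifying \cref{eq:TXActionF} for the non-polynomial continuous function $\varphi(t) = (t+\lambda)^{-1}$. For $\lambda > 0$, $T_X$ is a bounded positive self-adjoint operator on $\caH$ and $\varphi$ is continuous on its spectrum, so $\varphi(T_X)$ is unambiguously defined via the functional calculus, and the identity extends from polynomials by a standard density argument (the same argument already implicit in \cref{eq:TXMatrixForm}). Apart from this routine bookkeeping there is no substantive obstacle, so I expect the entire proof to be about half a page of clean algebra.
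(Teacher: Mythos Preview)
Your proposal is correct and follows essentially the same route as the paper: identify $\K(X,x)=h_x[X]$, apply \cref{eq:TXActionF} with $\varphi(t)=(t+\lambda)^{-1}$ to obtain $(K+\lambda)^{-1}h_x[X]=\bigl((T_X+\lambda)^{-1}h_x\bigr)[X]$, and then recognise $\frac{1}{n}\|\cdot\|_{\R^n}^2$ as the empirical semi-norm \cref{eq:SampleInnerProductL2}. Your extra remark on the functional calculus for $\varphi(t)=(t+\lambda)^{-1}$ is harmless bookkeeping, since the paper already states \cref{eq:TXMatrixForm}--\cref{eq:TXActionF} for arbitrary continuous $\varphi$.
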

  \begin{proof}
    By definition it is obvious that $h_x[X] = \K(X,x)$.
    From \cref{eq:TXActionF}, we find that
    \begin{align*}
      \left( (T_X+\lambda)^{-1}h_x  \right)[X] = (K+\lambda)^{-1} h_x[X] = (K+\lambda)^{-1}\K(X,x),
    \end{align*}
    so
    \begin{align*}
      \frac{1}{n} \K(x,X)(K+\lambda)^{-2}\K(X,x)
      &= \frac{1}{n}\norm{(K+\lambda)^{-1}\K(X,x)}_{\R^n}^2 \\
      &= \frac{1}{n}\norm{\{ (T_X+\lambda)^{-1}h_x\}[X]}_{\R^n}^2 \\
      &= \norm{(T_X+\lambda)^{-1}h_x}_{L^2,n}^2.
    \end{align*}
    from the definition \cref{eq:SampleInnerProductL2} of empirical semi-norm.
  \end{proof}

  The operator form \cref{eq:VarAlterForm} allows us to apply concentration inequalities and establish the following
  two-step approximation~\citep{li2023_SaturationEffect}.
  The main difference is that we allow $\lambda$ to be much smaller than in \citet{li2023_SaturationEffect}.
  \begin{align}
    \label{eq:4_2Step}
    \norm{(T_X+\lambda)^{-1}h_x}_{L^2,n}^2
    \stackrel{A}{\approx}
    \norm{(T+\lambda)^{-1}h_x}_{L^2,n}^2
    \stackrel{B}{\approx}
    \norm{(T+\lambda)^{-1}h_x}_{L^2}^2.
  \end{align}

  \subsection{Approximation B}
  Let us first consider the approximation B in \cref{eq:4_2Step}.
  We first provide norm controls of basis functions $h_x = k(x,\cdot)$ using the embedding condition.
  \begin{lemma}
    Suppose $\caH$ has embedding index $\alpha_0$.
    Let $p,\gamma \geq 0$, $\alpha > \alpha_0$ such that $0 \leq 2-\gamma-\alpha \leq 2p$, then
    \begin{align}
      \norm{T_{\lambda}^{-p} h_x}_{[\caH]^\gamma}^2 \leq M_\alpha^2 \lambda^{2-2p-\gamma-\alpha},\quad \mu\text{-a.e.}~ x\in \caX.
    \end{align}
  \end{lemma}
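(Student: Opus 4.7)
The plan is to expand everything in the Mercer basis, reduce to a scalar inequality on eigenvalues, and close the argument with the embedding characterization \cref{eq:Kalpha_Inf}. First I would write $h_x = k(x,\cdot) = \sum_{i \in N} \lambda_i e_i(x) e_i$ from \cref{eq:Mercer}. Since the operator $T$ is diagonal in the orthonormal basis $\{e_i\}$ with eigenvalues $\lambda_i$, functional calculus on $L^2$ gives
\begin{align*}
  T_\lambda^{-p} h_x = \sum_{i \in N} \frac{\lambda_i}{(\lambda_i + \lambda)^{p}} e_i(x)\, e_i.
\end{align*}
Rewriting in the orthonormal basis $\{\lambda_i^{\gamma/2} e_i\}$ of $[\caH]^\gamma$ (see \cref{eq:GammaNorm}), the coefficient of $\lambda_i^{\gamma/2} e_i$ equals $\lambda_i^{1-\gamma/2} (\lambda_i+\lambda)^{-p} e_i(x)$, so
\begin{align*}
  \norm{T_\lambda^{-p} h_x}_{[\caH]^\gamma}^2
  = \sum_{i \in N} \frac{\lambda_i^{2-\gamma}}{(\lambda_i+\lambda)^{2p}} e_i(x)^2.
\end{align*}

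Next I would isolate the factor $\lambda_i^\alpha e_i(x)^2$ that the embedding hypothesis controls. Factoring,
\begin{align*}
  \frac{\lambda_i^{2-\gamma}}{(\lambda_i+\lambda)^{2p}}
  = \lambda_i^{\alpha} \cdot \frac{\lambda_i^{2-\gamma-\alpha}}{(\lambda_i+\lambda)^{2p}},
\end{align*}
and the assumption $0 \leq 2-\gamma-\alpha \leq 2p$ is exactly what is needed to bound the second factor uniformly in $i$ by $\lambda^{2-2p-\gamma-\alpha}$. Indeed, since $2-\gamma-\alpha \geq 0$ we may use $\lambda_i \leq \lambda_i+\lambda$ to get $\lambda_i^{2-\gamma-\alpha} \leq (\lambda_i+\lambda)^{2-\gamma-\alpha}$, and then since the resulting exponent $2-\gamma-\alpha-2p \leq 0$, monotonicity in $\lambda_i+\lambda \geq \lambda$ gives $(\lambda_i+\lambda)^{2-\gamma-\alpha-2p} \leq \lambda^{2-2p-\gamma-\alpha}$.

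Plugging this into the series,
\begin{align*}
  \norm{T_\lambda^{-p} h_x}_{[\caH]^\gamma}^2
  \leq \lambda^{2-2p-\gamma-\alpha} \sum_{i \in N} \lambda_i^{\alpha} e_i(x)^2,
\end{align*}
and by the embedding characterization \cref{eq:EMB_And_InfNorm}--\cref{eq:Kalpha_Inf}, for $\alpha > \alpha_0$ the remaining series is bounded by $\norm{k^\alpha_\mu}_{L^\infty}^2 = M_\alpha^2$ for $\mu$-a.e.\ $x$, which yields the claim. The only real subtlety is to check that the two exponent constraints translate cleanly into the two inequalities $\lambda_i^{2-\gamma-\alpha} \leq (\lambda_i+\lambda)^{2-\gamma-\alpha}$ and $(\lambda_i+\lambda)^{2-\gamma-\alpha-2p} \leq \lambda^{2-\gamma-\alpha-2p}$; once that bookkeeping is noted, no concentration or probabilistic argument is required and the proof is essentially a direct computation on eigenvalues.
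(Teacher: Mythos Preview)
Your proof is correct and follows essentially the same route as the paper: Mercer expansion of $h_x$, computation of the $[\caH]^\gamma$-norm as $\sum_i \lambda_i^{2-\gamma}(\lambda_i+\lambda)^{-2p} e_i(x)^2$, factoring out $\lambda_i^\alpha$, bounding the remaining scalar factor by $\lambda^{2-2p-\gamma-\alpha}$, and then invoking \cref{eq:EMB_And_InfNorm}--\cref{eq:Kalpha_Inf}. The only cosmetic difference is that the paper packages the scalar bound as an appeal to \cref{prop:FilterKRRControl} (applied with $s=(2-\gamma-\alpha)/(2p)\in[0,1]$ and raised to the $2p$-th power), whereas you prove the same inequality directly via the two monotonicity steps; both arguments are equivalent.
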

  \begin{proof}
    Recalling the definition \cref{eq:GammaNorm} and using Mercer's decomposition \cref{eq:Mercer}, we have
    \begin{align*}
      \norm{T_{\lambda}^{-p} h_x}_{[\caH]^\gamma}^2 &=
      \norm{T^{-\gamma/2} T_{\lambda}^{-p} h_x}_{L^2}^2 \\
      &= \norm{ \sum_{i \in N} \lambda_i^{-\gamma/2} (\lambda_i+\lambda)^{-p} \lambda_i e_i(x) e_i}_{L^2}^2 \\
      &= \sum_{i \in N} \lambda_i^{2-\gamma} (\lambda_i+\lambda)^{-2p} e_i(x)^2 \\
      & = \sum_{i \in N} \left\{ \lambda_i^{2-\gamma-\alpha} (\lambda_i+\lambda)^{-2p}  \right\} \lambda_i^\alpha e_i(x)^2 \\
      & \leq \sum_{i \in N} \lambda^{2-\gamma-\alpha-2p} \lambda_i^\alpha e_i(x)^2 \\
      & \leq M_\alpha^2 \lambda^{2-2p-\gamma-\alpha},
    \end{align*}
    where last but second inequality uses \cref{prop:FilterKRRControl} and the final inequality
    uses that fact that
    \begin{align*}
      \operatorname*{ess~sup}_{x \in \caX,~\mu} \sum_{i \in N} \lambda_i^{\alpha} e_i(x)^2
      = \norm{[\mathcal{H}]^\alpha \hookrightarrow L^{\infty}(\mathcal{X},\mu)}^2 = M_\alpha^2,
    \end{align*}
    which is the consequence of \cref{eq:Kalpha_Inf} and \cref{eq:EMB_And_InfNorm}.
  \end{proof}
  Noticing that the $L^\infty$-norm of $f \in \caH$ can be bounded by
  \begin{align*}
    \norm{f}_{L^\infty}^2
    \leq M_\alpha^2 \norm{f}_{[\caH]^\alpha}^2,
  \end{align*}
  we have the following corollary.
  \begin{corollary}
    Suppose $\caH$ has embedding index $\alpha_0$ and $\alpha > \alpha_0$. Then the following holds for $\mu\text{-a.e.}~ x\in \caX$:
    \begin{align}
      \label{eq:RegK_Inf}
      \norm{T_{\lambda}^{-1}h_x}_{L^\infty}^2 & \leq M_{\alpha}^4 \lambda^{-2\alpha}, \\
      \label{eq:RegK_L2}
      \norm{T_{\lambda}^{-1}h_x}_{L^2}^2 & \leq M_{\alpha}^2 \lambda^{-\alpha}, \\
      \label{eq:RegK_H}
      \norm{T_{\lambda}^{-1/2}h_x}_{\caH}^2 & \leq M_{\alpha}^2 \lambda^{-\alpha}.
    \end{align}
  \end{corollary}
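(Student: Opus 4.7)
The corollary is a direct specialization of the preceding lemma to three specific choices of $(p,\gamma)$, combined with the embedding inequality $\norm{f}_{L^\infty}^2 \leq M_\alpha^2 \norm{f}_{[\caH]^\alpha}^2$ noted just before the statement. My plan is, in each of the three cases, to pick parameters $(p,\gamma)$ so that the target left-hand-side norm matches $\norm{\cdot}_{[\caH]^\gamma}$ and to verify that the admissibility constraint $0 \leq 2-\gamma-\alpha \leq 2p$ of the lemma holds under $\alpha \in (\alpha_0, 1]$.

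For \cref{eq:RegK_L2}, I would set $p=1$ and $\gamma=0$. Since $\{e_i\}_{i \in N}$ is an orthonormal basis of $\overline{\ran S_\mu} \subseteq L^2$ and the $[\caH]^0$ norm in \cref{eq:GammaNorm} reduces to the $L^2$ norm on this closed subspace, the lemma yields $\norm{T_\lambda^{-1} h_x}_{L^2}^2 \leq M_\alpha^2 \lambda^{-\alpha}$; the constraint $0 \leq 2-\alpha \leq 2$ is satisfied since $\alpha \in (0,1]$. For \cref{eq:RegK_H}, I would take $p=1/2$ and $\gamma=1$, using that $\{\lambda_i^{1/2} e_i\}_{i \in N}$ is an orthonormal basis of $\overline{\ran S_\mu^*} \subseteq \caH$, so that $[\caH]^1$ inherits the $\caH$ norm. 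The exponent becomes $2-1-1-\alpha = -\alpha$, and the admissibility condition reads $0 \leq 1-\alpha \leq 1$, again automatic for $\alpha \in (0,1]$.

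For \cref{eq:RegK_Inf}, I would set $p=1$ and $\gamma=\alpha$ to extract $\norm{T_\lambda^{-1} h_x}_{[\caH]^\alpha}^2 \leq M_\alpha^2 \lambda^{-2\alpha}$; here the constraint $0 \leq 2-2\alpha \leq 2$ holds for $\alpha \in (0,1]$. Then chaining with the embedding inequality $\norm{\cdot}_{L^\infty}^2 \leq M_\alpha^2 \norm{\cdot}_{[\caH]^\alpha}^2$ multiplies the bound by an extra factor of $M_\alpha^2$, producing the stated $M_\alpha^4 \lambda^{-2\alpha}$. There is essentially no obstacle: all the substantive estimation — the uniform bound on $\sum_i \lambda_i^\alpha e_i(x)^2$ via the embedding-index identity \cref{eq:EMB_And_InfNorm} and the elementary inequality $\lambda_i^{2-\gamma-\alpha}(\lambda_i+\lambda)^{-2p} \leq \lambda^{2-\gamma-\alpha-2p}$ — has been absorbed into the preceding lemma, and what remains is bookkeeping to confirm the parameter ranges.
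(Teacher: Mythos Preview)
Your proposal is correct and is exactly the argument the paper intends: the corollary is obtained by specializing the preceding lemma with $(p,\gamma)=(1,0)$, $(1/2,1)$, and $(1,\alpha)$, using the identifications $[\caH]^0 \cong L^2$ and $[\caH]^1 \cong \caH$ together with the embedding bound $\norm{f}_{L^\infty}^2 \leq M_\alpha^2 \norm{f}_{[\caH]^\alpha}^2$ for the first inequality. The parameter checks you give (which rely on $\alpha \in (\alpha_0,1]$) are precisely the admissibility conditions needed, and nothing further is required.
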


  Now we can provide the following estimation using concentration inequalities with a covering number argument.

  \begin{lemma}
    \label{lem:ApproxB}
    Suppose that the embedding index of $\caH$ is $\alpha_0$ and \cref{assu:Holder} holds.
    Fix $\alpha > \alpha_0$ and suppose $\lambda = \lambda(n) \to 0$ satisfies
    $\lambda = \Omega\left(n^{-p}\right)$ for some $p < \infty$.
    Then, for any $\delta \in (0,1)$,
    for sufficiently large $n$ (which will depend on $\delta, p$),
    with probability at least $1-\delta$,
    for $\mu$-a.e. $x \in \caX$
    \begin{align*}
      \frac{1}{2} \norm{T_{\lambda}^{-1} h_x}_{L^2}^2 - R
      \leq \norm{T_{\lambda}^{-1} h_x}_{L^2,n}^2 \leq \frac{3}{2} \norm{T_{\lambda}^{-1} h_x}_{L^2}^2
      + R,
    \end{align*}
    where
    \begin{align*}
      R = R(\lambda,n) = C M_{\alpha}^4 \frac{\lambda^{-2\alpha}}{n} \ln \frac{n}{\delta}
    \end{align*}
    and constant $C$ is independent of $\delta, n$.
  \end{lemma}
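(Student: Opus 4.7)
\textbf{Proof plan for \cref{lem:ApproxB}.}
The plan is to first prove a pointwise Bernstein-type bound for any fixed $x$ using the $L^\infty$ and $L^2$ estimates from \cref{eq:RegK_Inf}--\cref{eq:RegK_L2}, then upgrade it to a $\mu$-a.e.\ statement via an $\epsilon$-net discretization that exploits the Hölder continuity of the kernel (\cref{assu:Holder}). Throughout, write $g_x = T_\lambda^{-1} h_x$; the quantity of interest is the empirical average
\begin{align*}
\norm{g_x}_{L^2,n}^2 = \frac{1}{n}\sum_{i=1}^n g_x(x_i)^2 = \frac{1}{n}\sum_{i=1}^n \xi_i(x),
\end{align*}
whose expectation equals $\norm{g_x}_{L^2}^2$.

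For the pointwise step, apply Bernstein's inequality to the bounded i.i.d.\ random variables $\xi_i(x) = g_x(x_i)^2$. By \cref{eq:RegK_Inf} we have $\norm{\xi_i}_{\infty} \leq \norm{g_x}_\infty^2 \leq M_\alpha^4 \lambda^{-2\alpha}$, and the second-moment bound $E[\xi_i^2] \leq \norm{g_x}_\infty^2\,\norm{g_x}_{L^2}^2$ gives the variance proxy. Bernstein then yields, with probability at least $1-\delta$,
\begin{align*}
\abs{\norm{g_x}_{L^2,n}^2 - \norm{g_x}_{L^2}^2} \leq C\sqrt{\frac{\norm{g_x}_\infty^2\,\norm{g_x}_{L^2}^2 \ln(1/\delta)}{n}} + C\,\frac{\norm{g_x}_\infty^2 \ln(1/\delta)}{n}.
\end{align*}
A single application of AM-GM, $C\sqrt{ab} \leq \tfrac12 a + \tfrac12 C^2 b$, splits the square-root term into $\tfrac12\norm{g_x}_{L^2}^2$ plus a contribution absorbed into $R$, yielding the claimed two-sided bound for fixed $x$.

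For the uniform-in-$x$ step, choose an $\epsilon$-net $\{x^{(j)}\} \subset \caX$ with $\epsilon = n^{-q}$; since $\caX$ is compact in $\R^d$, the net size is $N \leq C\epsilon^{-d} = Cn^{qd}$, so a union bound inflates $\ln(1/\delta)$ only to $\ln(N/\delta) = O(\ln(n/\delta))$, preserving the form of $R$. It remains to control the discretization error. From \cref{assu:Holder} one gets $\norm{h_x - h_{x'}}_\caH^2 = k(x,x) + k(x',x') - 2k(x,x') \leq 2L\norm{x-x'}^s$. Passing through $T_\lambda^{-1}$ with the crude spectral bound $\norm{T_\lambda^{-1}}_{\caH\to L^\infty}$ controlled by $\lambda^{-1}$ times an embedding constant (or directly by the interpolation-space estimates of the preceding lemma applied to $h_x-h_{x'}$), one obtains $\norm{g_x - g_{x'}}_\infty \leq C\lambda^{-C'}\norm{x-x'}^{s/2}$ for some constants. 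Writing
\begin{align*}
\abs{\norm{g_x}_{L^2,n}^2 - \norm{g_{x'}}_{L^2,n}^2} \leq \norm{g_x - g_{x'}}_\infty \cdot \bigl(\norm{g_x}_\infty + \norm{g_{x'}}_\infty\bigr),
\end{align*}
and similarly for $\norm{\cdot}_{L^2}^2$, the discretization error is at most a polynomial in $\lambda^{-1}$ times $n^{-qs/2}$. Since $\lambda = \Omega(n^{-p})$, picking $q$ large enough (depending on $p, s, \alpha$) makes this discretization error $\ll R$, and by continuity of both sides in $x$ the bound extends to $\mu$-a.e.\ $x$.

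The main obstacle is the last step: making the argument uniform in $x$ while keeping the additive error of the form $R = C M_\alpha^4 \lambda^{-2\alpha} n^{-1}\ln(n/\delta)$. The Bernstein step alone is standard; the delicacy lies in balancing the covering-number logarithm (which wants a small net) against the deterministic discretization error (which wants $\epsilon$ small enough to beat the possibly large factor $\lambda^{-C'}$). The Hölder assumption and the fact that $\lambda$ is only polynomially small in $n$ make this balance possible by choosing the exponent $q$ depending on $p$, $s$, and $\alpha$, which is exactly why the lemma's "sufficiently large $n$" is allowed to depend on $p$.
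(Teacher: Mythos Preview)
Your proposal is correct and follows essentially the same route as the paper. The paper simply modularizes your two steps: the pointwise Bernstein-plus-AM-GM bound is packaged as \cref{prop:SampleNormEstimation}, and your $\epsilon$-net-on-$\caX$ plus Hölder argument (giving $\norm{g_x - g_{x'}}_\infty \leq C\lambda^{-1}\norm{x-x'}^{s/2}$) is precisely the content of \cref{lem:CoveringRegularK}, a sup-norm covering bound $\mathcal{N}(\{T_\lambda^{-1}h_x\},\norm{\cdot}_\infty,\epsilon)\leq C(\lambda\epsilon)^{-2d/s}$, after which the paper takes $\epsilon = n^{-1}$ and proceeds exactly as you describe.
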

  \begin{proof}
    Let us denote $\mathcal{K}_\lambda = \left\{ T_{\lambda}^{-1} h_x \right\}_{x \in \caX}$.
    By \cref{lem:CoveringRegularK},
    we can find an $\ep$-net $\mathcal{F} \subseteq \mathcal{K}_\lambda \subseteq \caH$ with respect to sup-norm of
    $\mathcal{K}_\lambda$ such that
    \begin{align}
      \label{eq:Proof_L2NormApprox_Covering}
      \abs{\mathcal{F}} \leq C \left( \lambda \ep \right)^{-\frac{2d}{s}},
    \end{align}
    where $\ep = \ep(n)$ will be determined later.

    Then, applying \cref{prop:SampleNormEstimation} to $\mathcal{F}$ with the $\norm{\cdot}_{L^\infty}$-bound \cref{eq:RegK_Inf},
    with probability at least $1-\delta$ we have
    \begin{align}
      \label{eq:L2nNormEstimation}
      \frac{1}{2}\norm{f}_{L^2}^2 - R_1
      \leq \norm{f}_{L^2,n}^2 \leq \frac{3}{2}\norm{f}_{L^2}^2 + R_1,
      \quad \forall f \in \mathcal{F},
    \end{align}
    where $ R_1 = C M_{\alpha}^4 \lambda^{-2\alpha} n^{-1} \ln(2\abs{\mathcal{F}} / \delta)$.

    Now, since $\mathcal{F}$ is an $\ep$-net of
    $\mathcal{K}_\lambda$ with respect to $\norm{\cdot}_{\infty}$,
    for any $x \in \mathcal{X}$, there exists some $f \in \mathcal{F}$ such that
    \begin{align*}
      \norm{T_{\lambda}^{-1} h_x - f}_{\infty} \leq \ep,
    \end{align*}
    which implies that
    \begin{align*}
      \abs{\norm{T_{\lambda}^{-1} h_x}_{L^2} - \norm{f}_{L^2}} \leq \ep,\quad
      \abs{\norm{T_{\lambda}^{-1} h_x}_{L^2,n} - \norm{f}_{L^2,n}} \leq \ep.
    \end{align*}
    Moreover, from \cref{eq:RegK_Inf} again we know that for $\mu$-a.e. $x \in \caX$,
    \begin{align*}
      \norm{T_{\lambda}^{-1} h_x}_{L^2},~ \norm{T_{\lambda}^{-1} h_x}_{L^2,n} \leq M_{\alpha} \lambda^{-\alpha},
    \end{align*}
    which is also true for $f$ since $f \in \mathcal{K}_{\lambda}$.
    Using $a^{2}-b^{2}=(a-b)(a+b)$, we get
    \begin{equation}
      \label{eq:L2NormEstimation}
      \begin{aligned}
        \abs{\norm{T_{\lambda}^{-1} h_x}_{L^2}^2 - \norm{f}_{L^2}^2} \leq 2 M_\alpha \ep \lambda^{-\alpha}, \\
        \abs{\norm{T_{\lambda}^{-1} h_x}_{L^2,n}^2 - \norm{f}_{L^2,n}^2} \leq 2 M_\alpha \ep \lambda^{-\alpha}.
      \end{aligned}
    \end{equation}

    Without loss of generality we consider only the upper bound:
    \begin{align*}
      \norm{T_{\lambda}^{-1} h_x}_{L^2,n}^2 & \leq \norm{f}_{L^2,n}^2 + 2 M_\alpha \ep \lambda^{-\alpha} \qq{(by \cref{eq:L2NormEstimation})} \\
      \qq{(by \cref{eq:L2nNormEstimation})}& \leq \frac{3}{2}\norm{f}_{L^2}^2 + R_1 + 2 M_\alpha \ep \lambda^{-\alpha}\\
      \qq{(by \cref{eq:L2NormEstimation} again)} & \leq \frac{3}{2}\norm{T_{\lambda}^{-1} h_x}_{L^2}^2
      + R_1 + 4 M_\alpha \ep \lambda^{-\alpha}  \\
      & = \frac{3}{2}\norm{T_{\lambda}^{-1} h_x}_{L^2}^2 + R_2,
    \end{align*}
    where $R_2 = C M_{\alpha}^4 n^{-1} \lambda^{-2\alpha}\ln(2\abs{\mathcal{F}}/\delta) + 4 M_\alpha \ep \lambda^{-\alpha}$.
    Letting $\ep = n^{-1}$ and applying in \cref{eq:Proof_L2NormApprox_Covering},
    the second term in $R_2$ is infinitesimal to the first one, and we have
    \begin{align*}
      R_2 \leq C M_{\alpha}^4 \frac{\lambda^{-2\alpha}}{n}\left( \ln \lambda + \ln n + \ln \frac{1}{\delta} \right)
      \leq C M_{\alpha}^4\frac{\lambda^{-2\alpha}}{n} \ln \frac{n}{\delta},
    \end{align*}
    since $\lambda = \Omega\left(n^{-p}\right)$.
  \end{proof}

  \subsection{Approximation A}

  The following proposition is a slightly modified version of \citet[Lemma 17]{fischer2020_SobolevNorm}.
  Compared with \citet[Lemma C.2]{li2023_SaturationEffect},
  it weakens the requirement of $\lambda$ using the embedding property.
  \begin{proposition}
    \label{prop:ConcenIneq}
    Suppose $\caH$ has embedding index $\alpha_0$ and \cref{assu:EDR} holds.
    Let $\lambda = \lambda(n) \to 0$ satisfy
    $\lambda = \Omega\left(n^{-1/(\alpha_0 + p)}\right)$ for some $p > 0$
    and fix arbitrary $\alpha \in (\alpha_0,\alpha_0 + p)$.
    Then, for all $\delta \in (0,1)$,
    when $n$ is sufficiently large (depending on $\delta,p,\alpha$), with probability at least $1 - \delta$,
    \begin{equation}
      \norm{T_{\lambda}^{-\frac{1}{2}} (T - T_X) T_{\lambda}^{-\frac{1}{2}} }_{\mathscr{B}(\caH)}
      \leq C M_\alpha \left( \frac{\lambda^{-\alpha}}{n}  \ln \frac{n}{\delta} \right)^{1/2},
    \end{equation}
    where $C > 0$ is a constant no depending on $n,\delta, p,\alpha$,
    and we also have
    \begin{align}
      \norm{T_{X\lambda}^{-1/2} T_{\lambda}^{1/2}}_{\mathscr{B}(\caH)} \leq \sqrt{3}.
    \end{align}
  \end{proposition}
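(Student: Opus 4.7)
The plan is to express $T - T_X$ as an empirical mean of i.i.d.\ zero-mean self-adjoint operators on $\caH$ and apply a Bernstein-type concentration inequality for operator norms (of the form already used in \citet{fischer2020_SobolevNorm,li2023_SaturationEffect}). Concretely, I write $T - T_X = \frac{1}{n}\sum_{i=1}^n(T - K_{x_i}K_{x_i}^*)$ and conjugate by $T_\lambda^{-1/2}$, setting $A_i := T_\lambda^{-1/2}(T - K_{x_i}K_{x_i}^*)T_\lambda^{-1/2}$. The $A_i$ are i.i.d., mean-zero, self-adjoint elements of $\mathscr{B}(\caH)$, so the quantity to control, $\norm{T_\lambda^{-1/2}(T-T_X)T_\lambda^{-1/2}}_{\mathscr{B}(\caH)}$, is just the operator norm of their empirical mean. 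The inequality will require an almost-sure bound $L$ on $\norm{A_i}$ and an operator-norm bound $V$ on $\norm{\mathbb{E}[A_i^2]}$, and the embedding condition is exactly what lets us estimate both sharply.

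For $L$, notice that $T_\lambda^{-1/2}K_{x_i}K_{x_i}^* T_\lambda^{-1/2}$ is a rank-one positive operator of norm $\norm{T_\lambda^{-1/2}h_{x_i}}_{\caH}^2$, so \cref{eq:RegK_H} gives $\norm{A_i}_{\mathscr{B}(\caH)} \leq C M_\alpha^2 \lambda^{-\alpha}$ for $\mu$-a.e.\ $x_i$. For $V$, set $B_i := T_\lambda^{-1/2}K_{x_i}K_{x_i}^* T_\lambda^{-1/2}$; the rank-one identity $B_i^2 = \norm{T_\lambda^{-1/2}h_{x_i}}_{\caH}^2 B_i \preceq M_\alpha^2 \lambda^{-\alpha} B_i$ gives $\mathbb{E}[A_i^2] \preceq \mathbb{E}[B_i^2] \preceq M_\alpha^2 \lambda^{-\alpha}\, T_\lambda^{-1/2} T T_\lambda^{-1/2}$, and since $\norm{T_\lambda^{-1/2}TT_\lambda^{-1/2}}_{\mathscr{B}(\caH)} \leq 1$ I obtain $V \leq M_\alpha^2 \lambda^{-\alpha}$.

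Plugging $L$ and $V$ into the Bernstein bound yields, with probability at least $1-\delta$,
\[
\norm{T_\lambda^{-1/2}(T - T_X) T_\lambda^{-1/2}}_{\mathscr{B}(\caH)} \leq C M_\alpha \left(\frac{\lambda^{-\alpha}}{n}\ln\frac{n}{\delta}\right)^{1/2} + C' M_\alpha^2 \frac{\lambda^{-\alpha}}{n}\ln\frac{n}{\delta}.
\]
Under the hypothesis $\lambda = \Omega(n^{-1/(\alpha_0+p)})$ with $\alpha < \alpha_0+p$, we have $\lambda^{-\alpha}/n = O(n^{\alpha/(\alpha_0+p)-1}) = o(1)$, so the linear term is absorbed into the square-root term for $n$ large enough (depending on $\delta, p, \alpha$), establishing the first inequality. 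For the second, I use the identity $T_\lambda^{-1/2}T_{X\lambda}T_\lambda^{-1/2} = I - T_\lambda^{-1/2}(T - T_X)T_\lambda^{-1/2}$: once $n$ is large enough that the right-hand side above is at most $2/3$, this operator satisfies $T_\lambda^{-1/2}T_{X\lambda}T_\lambda^{-1/2} \succeq \tfrac{1}{3}I$, whose inverse has operator norm at most $3$, so $\norm{T_{X\lambda}^{-1/2}T_\lambda^{1/2}}^2 = \norm{(T_\lambda^{-1/2}T_{X\lambda}T_\lambda^{-1/2})^{-1}} \leq 3$.

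The hard part is obtaining the two moment bounds with the correct $\lambda^{-\alpha}$ scaling. Without the embedding condition, one is stuck with the trivial $\norm{T_\lambda^{-1/2}h_x}_{\caH}^2 \leq \kappa^2 \lambda^{-1}$ (effectively the case $\alpha=1$), which forces $\lambda \gtrsim n^{-1/2}$ and rules out the regime needed for \cref{thm:NoGeneralization}; allowing the exponent to slide down to any $\alpha > \alpha_0 = 1/\beta$ is precisely what unlocks the near-optimal range $\lambda \gtrsim n^{-\beta+\ep}$ required to drive the final lower bound down to $n^{-\ep}$.
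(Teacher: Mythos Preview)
Your proposal is correct and matches the paper's argument: both apply the operator-Bernstein inequality (packaged in the paper as \cref{prop:ConcenIneq1}, via \cref{lem:ConcenBernstein}) with the embedding-driven bounds $L \leq 2M_\alpha^2\lambda^{-\alpha}$ and $V = M_\alpha^2\lambda^{-\alpha}\,T T_\lambda^{-1}$, then absorb the linear Bernstein term into the square-root term via $\lambda^{-\alpha}/n = o(1)$, and derive the second inequality from the identity $T_\lambda^{-1/2}T_{X\lambda}T_\lambda^{-1/2} = I - T_\lambda^{-1/2}(T-T_X)T_\lambda^{-1/2}$. The one step you glossed over is that the logarithmic factor in the operator Bernstein bound is $\ln\bigl(\tr V /(\delta\,\|V\|)\bigr) \asymp \ln(\mathcal{N}(\lambda)/\delta)$ rather than $\ln(n/\delta)$; the paper makes this conversion explicit using $\mathcal{N}(\lambda)\leq C\lambda^{-1/\beta}$ from \cref{assu:EDR} together with the assumed polynomial lower bound on $\lambda$.
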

  \begin{proof}
    We apply \cref{prop:ConcenIneq1}.
    We use $\norm{\cdot}$ for the operator norm $\norm{\cdot}_{\mathscr{B}(\caH)}$ in the proof for simplicity.
    By \cref{prop:EffectiveDimEstimation}, $\mathcal{N}(\lambda) \leq C \lambda^{-1/\beta}$.
    Moreover, since $\lambda = \Omega\left(n^{-1/(\alpha_0 + p)}\right)$ for some $p > 0$
    and $\alpha \in (\alpha_0,\alpha_0 + p)$, we have
    \begin{align*}
      u &= \frac{M_{\alpha}^{2} \lambda^{-\alpha}}{n} \ln{\frac{4 \mathcal{N}(\lambda) (\norm{T} + \lambda) }{\delta \norm{T}}} \\
      & \leq CM_{\alpha}^2 \frac{\lambda^{-\alpha}}{n} \left(  \ln \lambda^{-1/\beta} + \ln \frac{1}{\delta} + C \right) \\
      & \leq CM_{\alpha}^2 \frac{\lambda^{-\alpha}}{n} \left\{ (\alpha_0 + p)^{-1}\beta^{-1} \ln n + \ln \frac{1}{\delta} + C \right\} \\
      & \leq CM_{\alpha}^2 \frac{\lambda^{-\alpha}}{n} (\ln n + \ln \frac{1}{\delta}) = o(1),
    \end{align*}
    and thus
    \begin{align*}
      \norm{T_{\lambda}^{-\frac{1}{2}} (T - T_X) T_{\lambda}^{-\frac{1}{2}} } \leq
      \frac{4}{3} u + (2u)^{\frac{1}{2}} \leq C u^{\frac{1}{2}} \leq C M_\alpha \left( \frac{\lambda^{-\alpha}}{n}  \ln \frac{n}{\delta} \right)^{1/2}.
    \end{align*}

    For the second part, when $n$ is sufficiently large that $u \leq \frac{1}{8}$,
    \begin{align*}
      \norm{T_{\lambda}^{-\frac{1}{2}} (T - T_X) T_{\lambda}^{-\frac{1}{2}} }
      = \frac{4}{3} u + (2u)^{\frac{1}{2}} \leq \frac{2}{3}.
    \end{align*}
    Noticing that $\left( T_{X\lambda}^{-1/2} T_{\lambda}^{1/2} \right)^* = T_{\lambda}^{1/2}T_{X\lambda}^{-1/2}$,
    we have
    \begin{align*}
      \norm{T_{X\lambda}^{-1/2} T_{\lambda}^{1/2}}^2
      &= \norm{T_\lambda^{1/2} (T_X +\lambda)^{-1}T_\lambda^{1/2}} \\
      &= \norm{\left\{ T_\lambda^{-1/2} (T_X +\lambda)T_\lambda^{-1/2} \right\}^{-1}} \\
      &= \norm{\left\{ I - T_\lambda^{-1/2} (T - T_X)T_\lambda^{-1/2} \right\}^{-1}} \\
      & \leq \left(1-\norm{T_{\lambda}^{-\frac{1}{2}} (T - T_X) T_{\lambda}^{-\frac{1}{2}} }\right)^{-1} \leq 3.
    \end{align*}
  \end{proof}

  Then, the following lemma improves \citet[Lemma C.12]{li2023_SaturationEffect}
  and allows $\lambda$ to be of the order $n^{-\beta+\ep}$ if \cref{assu:EMB} is also satisfied.

  \begin{lemma}
    \label{lem:ApproxA}
    Suppose $\caH$ has embedding index $\alpha_0$ and \cref{assu:EDR} holds.
    Let $\lambda = \lambda(n) \to 0$ satisfy
    $\lambda = \Omega\left(n^{-1/(\alpha_0 + p)}\right)$ for some $p > 0$
    and fix arbitrary $\alpha \in (\alpha_0,\alpha_0 + p)$.
    Then, for sufficiently large $n$ (depending on $\delta,p,\alpha$), with probability at least $1-\delta$
    it is satisfied that for $\mu$-a.e. $x \in \caX$,
    \begin{align*}
      \abs{\norm{T_X^{1/2}T_{X\lambda}^{-1} h_x}_{\caH} - \norm{T_X^{1/2}T_{\lambda}^{-1} h_x}_{\caH}}
      \leq CM_{\alpha}^2 \left( \frac{\lambda^{-2\alpha}}{n}  \ln \frac{n}{\delta} \right)^{1/2},
    \end{align*}
    where the constant $C > 0$ do not depend on $n,\delta, p,\alpha$.
  \end{lemma}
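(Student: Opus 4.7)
The idea is to convert the two norms into a single quantity via the reverse triangle inequality, apply the resolvent identity, and then split the resulting composition into four factors each controlled either by \cref{prop:ConcenIneq} or by the $\caH$-norm estimate \cref{eq:RegK_H}. By the reverse triangle inequality it suffices to bound
\begin{align*}
\norm{T_X^{1/2}\bigl(T_{X\lambda}^{-1} - T_{\lambda}^{-1}\bigr) h_x}_{\caH}.
\end{align*}
The standard resolvent identity $T_{X\lambda}^{-1} - T_{\lambda}^{-1} = T_{X\lambda}^{-1}(T - T_X)T_{\lambda}^{-1}$, together with the insertions $I = T_{X\lambda}^{1/2}T_{X\lambda}^{-1/2} = T_\lambda^{1/2}T_\lambda^{-1/2}$ on either side of $(T-T_X)$, lets me rewrite
\begin{align*}
T_X^{1/2}\bigl(T_{X\lambda}^{-1} - T_{\lambda}^{-1}\bigr) h_x
= \bigl(T_X^{1/2} T_{X\lambda}^{-1/2}\bigr)\bigl(T_{X\lambda}^{-1/2} T_{\lambda}^{1/2}\bigr)\bigl(T_{\lambda}^{-1/2}(T - T_X)T_{\lambda}^{-1/2}\bigr)\bigl(T_{\lambda}^{-1/2} h_x\bigr).
\end{align*}

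Next I would bound the four factors separately. The first is trivial, $\norm{T_X^{1/2} T_{X\lambda}^{-1/2}}_{\mathscr{B}(\caH)} \leq 1$, since $T_X \preceq T_X + \lambda$. The second is controlled by the companion bound $\norm{T_{X\lambda}^{-1/2} T_{\lambda}^{1/2}}_{\mathscr{B}(\caH)} \leq \sqrt{3}$ in \cref{prop:ConcenIneq}. The third factor is the one that delivers the probabilistic rate: \cref{prop:ConcenIneq} gives
\begin{align*}
\norm{T_{\lambda}^{-1/2}(T - T_X)T_{\lambda}^{-1/2}}_{\mathscr{B}(\caH)} \leq C M_\alpha \left( \frac{\lambda^{-\alpha}}{n}\ln \frac{n}{\delta} \right)^{1/2}.
\end{align*}
Finally, for the vector-side factor, \cref{eq:RegK_H} yields
$\norm{T_\lambda^{-1/2} h_x}_{\caH} \leq M_\alpha \lambda^{-\alpha/2}$ for $\mu$-a.e.\ $x \in \caX$. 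Multiplying the four bounds and absorbing constants recovers exactly the stated rate $C M_\alpha^2 \bigl(\lambda^{-2\alpha} n^{-1} \ln(n/\delta)\bigr)^{1/2}$.

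I do not expect a serious obstacle: the decomposition is the standard one and each building block has already been established in the preceding subsection. The one point that needs care is matching the $\lambda$-regime of \cref{prop:ConcenIneq} to the hypothesis of the lemma: the assumption $\lambda = \Omega(n^{-1/(\alpha_0 + p)})$ and the choice $\alpha \in (\alpha_0, \alpha_0 + p)$ are precisely what makes the exponent in the concentration small enough that, first, both bounds of \cref{prop:ConcenIneq} hold simultaneously on a single high-probability event, and second, the final rate $\lambda^{-2\alpha}/n$ remains of order $o(1)$ rather than blowing up. Under \cref{assu:EMB} this tolerates $\lambda$ as small as $n^{-\beta + \ep}$, which is the sharpening needed for the main theorem.
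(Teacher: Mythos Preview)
Your proposal is correct and follows essentially the same approach as the paper: reverse triangle inequality, the resolvent identity $T_{X\lambda}^{-1} - T_{\lambda}^{-1} = T_{X\lambda}^{-1}(T - T_X)T_{\lambda}^{-1}$, and the identical four-factor decomposition with each factor bounded by exactly the tools you name (\cref{prop:ConcenIneq} for the middle two operator factors, \cref{eq:RegK_H} for the vector factor, and the trivial bound $\norm{T_X^{1/2}T_{X\lambda}^{-1/2}}_{\mathscr{B}(\caH)}\leq 1$). Your remark on the $\lambda$-regime matching is also in line with how the paper uses the hypothesis.
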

  \begin{proof}
    We begin with
    \begin{align*}
      \abs{\norm{T_X^{1/2}T_{X\lambda}^{-1} h_x}_{\caH} - \norm{T_X^{1/2}T_{\lambda}^{-1} h_x}_{\caH}}
      & \leq \norm{T_X^{1/2}\left( T_{X\lambda}^{-1} - T_{\lambda}^{-1}\right) h_x}_{\caH} \\
      & = \norm{T_X^{1/2}T_{X\lambda}^{-1} \left( T - T_X \right) T_{\lambda}^{-1}h_x}_{\caH},
    \end{align*}
    where we notice that
    \begin{align*}
      T_{X\lambda}^{-1} - T_{\lambda}^{-1}
      = (T_X+\lambda)^{-1} -(T+\lambda)^{-1}
      = T_{X\lambda}^{-1} \left( T - T_X \right) T_{\lambda}^{-1}.
    \end{align*}

    Now let us decompose
    \begin{align*}
      &\quad \norm{T_X^{1/2}T_{X\lambda}^{-1} \left( T - T_X \right) T_{\lambda}^{-1}h_x}_{\caH} \\
      &= \norm{T_X^{1/2}T_{X\lambda}^{-1/2} \cdot T_{X\lambda}^{-1/2} T_{\lambda}^{1/2}
        \cdot  T_{\lambda}^{-1/2} \left( T - T_X \right)T_{\lambda}^{-1/2} \cdot
        T_{\lambda}^{-1/2}h_x}_{\caH} \\
      & \leq \norm{T_X^{1/2}T_{X\lambda}^{-1/2}}_{\mathscr{B}(\caH)}\cdot
      \norm{T_{X\lambda}^{-1/2} T_{\lambda}^{1/2}}_{\mathscr{B}(\caH)}
      \cdot \norm{T_{\lambda}^{-1/2} \left( T - T_X \right)T_{\lambda}^{-1/2} }_{\mathscr{B}(\caH)}
      \cdot \norm{T_{\lambda}^{-1/2}h_x}_{\caH} \\
      & \leq 1 \cdot \sqrt {3} \cdot C M_\alpha \left( \frac{\lambda^{-\alpha}}{n}  \ln \frac{n}{\delta} \right)^{1/2} \cdot M_{\alpha} \lambda^{-\alpha/2} \\
      & = C M_{\alpha}^2 \left( \frac{\lambda^{-2\alpha}}{n}  \ln \frac{n}{\delta} \right)^{1/2},
    \end{align*}
    where the four terms in the last inequality are:
    (1) from operator calculus and \cref{prop:FilterKRRControl};
    (2) and (3) from \cref{prop:ConcenIneq};
    (4) from \cref{eq:RegK_H}.

  \end{proof}

  \subsection{Final proof}


  \begin{theorem}
    \label{thm:VarLowerBound}
    Under Assumptions~\ref{assu:EDR},\ref{assu:EMB},\ref{assu:Holder} and~\ref{assu:noise},
    suppose that $\lambda = \lambda(n) \to 0$ satisfy
    $\lambda = \Omega\left(n^{-\beta+p}\right)$ for some $p > 0$.
    Then, for any $\delta \in (0,1)$, when $n$ is sufficiently large (depending on $\delta,p$),
    the following holds with probability at least $1-\delta$:
    \begin{align}
      \sigma^2 c \frac{\lambda^{-1/\beta}}{n} \leq V(\lambda) \leq \sigma^2 C \frac{\lambda^{-1/\beta}}{n},
    \end{align}
    where $c,C>0$ are absolute constants no depending on $\delta, n,\lambda$.
  \end{theorem}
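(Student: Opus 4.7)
The plan is to assemble the pieces already in hand: the operator reformulation
\[
V(\lambda)=\frac{\sigma^2}{n}\int_{\caX}\norm{T_{X\lambda}^{-1}h_x}_{L^2,n}^2\dd\mu(x),
\]
the two-step approximation \cref{eq:4_2Step} with quantitative remainders in \cref{lem:ApproxA} and \cref{lem:ApproxB}, and the spectral identity
\[
\int_{\caX}\norm{T_{\lambda}^{-1}h_x}_{L^2}^2\dd\mu(x)=\sum_{i\in N}\frac{\lambda_i^2}{(\lambda_i+\lambda)^2},
\]
which follows directly from Mercer's decomposition \cref{eq:Mercer} and the $L^2$-orthonormality of $\{e_i\}$. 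Under \cref{assu:EDR} a routine splitting of the sum at $i\asymp\lambda^{-1/\beta}$ shows that this spectral sum is $\asymp\lambda^{-1/\beta}$; this yields the target order.

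First I would rewrite $\norm{T_{X\lambda}^{-1}h_x}_{L^2,n}^2=\norm{T_X^{1/2}T_{X\lambda}^{-1}h_x}_{\caH}^2$ via \cref{eq:SampleInnerProductsRelation}, and similarly $\norm{T_\lambda^{-1}h_x}_{L^2,n}^2=\norm{T_X^{1/2}T_\lambda^{-1}h_x}_{\caH}^2$. Then \cref{lem:ApproxA} gives, for a fixed $\alpha\in(\alpha_0,\alpha_0+p)$ and with probability at least $1-\delta/2$, the bound
\[
\bigl|\norm{T_{X\lambda}^{-1}h_x}_{L^2,n}-\norm{T_\lambda^{-1}h_x}_{L^2,n}\bigr|\leq E_A:=CM_\alpha^2\left(\frac{\lambda^{-2\alpha}}{n}\ln\frac{n}{\delta}\right)^{1/2}
\]
for $\mu$-a.e.\ $x$; squaring via $(a\pm b)^2\leq 2a^2+2b^2$ or $(a+b)(a-b)$ together with the uniform $L^\infty$-bound on $\norm{T_\lambda^{-1}h_x}$ from \cref{eq:RegK_Inf} passes this into an $L^2,n$-statement. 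Next, \cref{lem:ApproxB} (applied with the same $\alpha$) yields, on a further event of probability at least $1-\delta/2$,
\[
\tfrac{1}{2}\norm{T_\lambda^{-1}h_x}_{L^2}^2-R\leq\norm{T_\lambda^{-1}h_x}_{L^2,n}^2\leq\tfrac{3}{2}\norm{T_\lambda^{-1}h_x}_{L^2}^2+R,
\]
with $R=CM_\alpha^4\lambda^{-2\alpha}n^{-1}\ln(n/\delta)$. Integrating against $\dd\mu$ and inserting into the formula for $V(\lambda)$ gives both bounds up to the constants $\tfrac{1}{2},\tfrac{3}{2}$, provided the combined remainder terms are absorbed into the leading $\asymp\lambda^{-1/\beta}$.

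The main obstacle is precisely this absorption step, because $E_A^2$ and $R$ are each of order $\lambda^{-2\alpha}n^{-1}$ (up to logs), while the main term is only $\lambda^{-1/\beta}$; since $\alpha>\alpha_0\geq 1/\beta$, naively the remainder looks at least as large as the signal. The resolution relies on two ingredients supplied by the hypotheses. First, \cref{assu:EMB} lets us take $\alpha$ arbitrarily close to $1/\beta$, so the gap $2\alpha-1/\beta$ can be made as small as we wish; write $\alpha=1/\beta+\eta$ with $\eta>0$ to be chosen. Second, $\lambda=\Omega(n^{-\beta+p})$ means $\lambda^{-1}=O(n^{\beta-p})$. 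The ratio of remainder to signal is thus
\[
\frac{\lambda^{-2\alpha}n^{-1}\ln(n/\delta)}{\lambda^{-1/\beta}}=\lambda^{1/\beta-2\alpha}n^{-1}\ln(n/\delta)=O\!\left(n^{(2\eta+1/\beta)(\beta-p)-1}\ln n\right),
\]
and the exponent equals $-p/\beta+2\eta(\beta-p)$, which is strictly negative once $\eta<p/[2\beta(\beta-p)]$. Fixing such an $\eta$ at the outset (which is allowed because $p$ is fixed and $\alpha_0=1/\beta$ by \cref{assu:EMB}), both remainders are $o(\lambda^{-1/\beta})$ as $n\to\infty$, and for $n$ large enough (depending on $\delta$ and $p$) they are bounded by, say, $\tfrac{1}{4}\lambda^{-1/\beta}$. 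Combining with the spectral estimate $\sum_i\lambda_i^2/(\lambda_i+\lambda)^2\asymp\lambda^{-1/\beta}$, a union bound over the two events of \cref{lem:ApproxA} and \cref{lem:ApproxB} then yields the two-sided bound $c\sigma^2\lambda^{-1/\beta}/n\leq V(\lambda)\leq C\sigma^2\lambda^{-1/\beta}/n$ on an event of probability at least $1-\delta$, completing the proof.
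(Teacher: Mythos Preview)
Your proposal is correct and follows essentially the same route as the paper: operator reformulation of $V(\lambda)$, the two-step approximation via \cref{lem:ApproxA} and \cref{lem:ApproxB}, the spectral identity $\int\norm{T_\lambda^{-1}h_x}_{L^2}^2\dd\mu=\mathcal{N}_2(\lambda)\asymp\lambda^{-1/\beta}$, and the absorption of the remainders by taking $\alpha$ sufficiently close to $\alpha_0=1/\beta$. The only cosmetic difference is that the paper uses the factoring $a^2-b^2=(a+b)(a-b)$ together with the $L^2$-bound \cref{eq:RegK_L2} on $\norm{T_\lambda^{-1}h_x}_{L^2}$ (giving an $R^{1/2}\lambda^{-\alpha/2}$ cross term and the constraint $\alpha<\beta^{-1}(3\beta-2p)/(3\beta-3p)$), whereas your crude squaring $b^2\le 2a^2+2E_A^2$ yields directly the remainder $E_A^2\asymp R$ and the cleaner constraint $\eta<p/[2\beta(\beta-p)]$; both choices work.
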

  \begin{proof}
    We only prove the lower bound and the upper bound is similar.
    First, we assert that
    \begin{align}
      \label{eq:ProofVar_Approx}
      \norm{T_{X\lambda}^{-1}h_x}_{L^2,n}^2 \geq \frac{1}{2} \norm{T_\lambda^{-1}h_x}_{L^2}^2 - o(\lambda^{-1/\beta})
    \end{align}
    holds with probability at least $1-\delta$ for large $n$.
    Then, we have
    \begin{align*}
      V(\lambda) =  \frac{\sigma^2}{n}\int_{\mathcal{X}} \norm{T_{X\lambda}^{-1} h_x}_{L^{2},n}^2 \dd \mu(x)
      \geq \frac{\sigma^2}{2n}\int_{\mathcal{X}} \norm{T_\lambda^{-1} h_x}_{L^2}^2 \dd \mu(x)
      - o(\frac{\lambda^{-1/\beta}}{n}).
    \end{align*}
    For the integral, applying Mercer's theorem,
    we get
    \begin{equation}
      \label{eq:OracleNormEstimation}
      \begin{aligned}
        \int_{\mathcal{X}} & \norm{T_\lambda^{-1} h_x}_{L^2}^2 \dd \mu(x)
        = \int_{\mathcal{X}}\sum_{i =1}^\infty \left( \frac{\lambda_i}{\lambda + \lambda_i} \right)^2 e_i(x)^2 \dd \mu(x) \\
        &= \sum_{i =1}^\infty \left( \frac{\lambda_i}{\lambda + \lambda_i} \right)^2 = \mathcal{N}_2(\lambda) \geq c\lambda^{-1/\beta},
      \end{aligned}
    \end{equation}
    where the last inequality comes from \cref{prop:EffectiveDimEstimation}.
    Therefore, we conclude that
    \begin{align*}
      V(\lambda) &\geq \frac{c \sigma^2}{2n} \lambda^{-1/\beta} - o(\frac{\lambda^{-1/\beta}}{n})
      \geq c_1 \frac{\lambda^{-1/\beta}}{n}.
    \end{align*}

    Now we prove \cref{eq:ProofVar_Approx}.
    Let us choose $\alpha > \alpha_0 = 1/\beta$ sufficiently close but fixed.
    The requirement of $\alpha$ will be seen in the following proof.
    Then, \cref{lem:ApproxB} and \cref{lem:ApproxA} yield that
    \begin{gather}
      \label{eq:ProofVar_L2NormBound}
      \frac{1}{2} \norm{T_{\lambda}^{-1} h_x}_{L^2}^2 - R
      \leq \norm{T_{\lambda}^{-1} h_x}_{L^2,n}^2 \leq \frac{3}{2} \norm{T_{\lambda}^{-1} h_x}_{L^2}^2 + R, \\
      \label{eq:ProofVar_DiffControl}
      \abs{\norm{T_X^{1/2}T_{X\lambda}^{-1} h_x}_{\caH} - \norm{T_X^{1/2}T_{\lambda}^{-1} h_x}_{\caH}}\leq R^{1/2},
      \quad \forall x \in \caX,
    \end{gather}
    with probability at least $1-\delta$ for large $n$, where
    \begin{align*}
      R = C M_{\alpha}^4 \frac{\lambda^{-2\alpha}}{n} \ln \frac{n}{\delta}
    \end{align*}
    for some constant $C$ not depending on $n,\delta, p,\alpha$.
    Consequently, the $L^2$-norm bound in \cref{eq:RegK_L2}
    and \cref{eq:ProofVar_L2NormBound} yield
    \begin{align*}
      \norm{T_X^{1/2}T_{\lambda}^{-1} h_x}_{\mathcal{H}} & =  \norm{T_{\lambda}^{-1} h_x}_{L^2,n}
      \leq  C \norm{T_{\lambda}^{-1} h_x}_{L^2} + R^{1/2}
      \leq C \lambda^{-\alpha/2} + R^{1/2}
    \end{align*}
    which, together with \cref{eq:ProofVar_DiffControl}, implies
    \begin{align*}
      \norm{T_X^{1/2}T_{X\lambda}^{-1} h_x}_{\caH} + \norm{T_X^{1/2}T_{\lambda}^{-1} h_x}_{\caH}  \leq
      2\norm{T_X^{1/2}T_{\lambda}^{-1} h_x}_{\caH} + R^{1/2}
      \leq C (\lambda^{-\alpha/2} + R^{1/2})
    \end{align*}
    Then, we obtain the approximation of the squared norm
    \begin{align}
      \label{eq:ProofVar_ApproxTXT}
      \abs{\norm{T_X^{1/2}T_{X\lambda}^{-1} h_x}_{\caH}^2 - \norm{T_X^{1/2}T_{\lambda}^{-1} h_x}_{\caH}^2}
      \leq C R^{1/2} (\lambda^{-\alpha/2} + R^{1/2}).
    \end{align}
    Combining \cref{eq:ProofVar_ApproxTXT} and \cref{eq:ProofVar_L2NormBound}, we get
    \begin{align}
      \notag
      \norm{T_{X\lambda}^{-1} h_x}_{L^2,n}^2 & = \norm{T_X^{1/2}T_{X\lambda}^{-1} h_x}_{\caH}^2 \\
      \notag
      & \geq \norm{T_X^{1/2}T_{\lambda}^{-1} h_x}_{\caH}^2
      - C R^{1/2} (\lambda^{-\alpha/2} + R^{1/2}) \\
      \notag
      & = \norm{T_{\lambda}^{-1} h_x}_{L^2,n}^2 - C R^{1/2} (\lambda^{-\alpha/2} + R^{1/2}) \\
      \notag
      & \geq \frac{1}{2} \norm{T_{\lambda}^{-1} h_x}_{L^2}^2 - R - C R^{1/2} (\lambda^{-\alpha/2} + R^{1/2}) \\
      \label{eq:Pf_ErrorTerm}
      & = \frac{1}{2} \norm{T_{\lambda}^{-1} h_x}_{L^2}^2 - CR^{1/2}  (R^{1/2} + \lambda^{-\alpha/2}).
    \end{align}
    Finally, we show that the error terms in \cref{eq:Pf_ErrorTerm} are infinitesimal with respect to the main term $\lambda^{-1/\beta}$.
    We recall that $\lambda = \Omega(n^{-\beta+p})$ and $\alpha_0 = 1/\beta$, so
    \begin{align*}
      R^{1/2} =  C M_\alpha^2 \left(  \frac{\lambda^{-\alpha}}{n} \right)^{1/2} \lambda^{-\alpha/2} \left(\ln \frac{n}{\delta}\right)^{1/2}
      \leq C M_\alpha^2 \lambda^{-\alpha/2}
    \end{align*}
    if $1/\beta  < \alpha <  1/(\beta-p)$.
    Therefore,
    \begin{align*}
      \lambda^{1/\beta} R^{1/2}  (R^{1/2} + \lambda^{-\alpha/2}) & \leq
      C \lambda^{1/\beta} R^{1/2} \cdot  M_\alpha^2\lambda^{-\alpha/2} \\
      & \leq C M_\alpha^4 n^{-1/2} \lambda^{1/\beta-3\alpha/2} \left(\ln \frac{n}{\delta}\right)^{1/2},
    \end{align*}
    and thus $R^{1/2}  (R^{1/2} + \lambda^{-\alpha/2}) = o(\lambda^{-1/\beta})$ if
    we further set $\beta^{-1}  < \alpha < \beta^{-1}  \cdot (3\beta-2p)/(3\beta-3p)$.
  \end{proof}

  \begin{proof}
    [Proof of \cref{thm:NoGeneralization}]
    Let us choose $\lambda = n^{-\beta + p}$ for some $p > 0$.
    Then, from \cref{prop:VarianceTerm} and \cref{prop:VarOrder}, we have
    \begin{align*}
      E \left( \norm{\hat{f}_{\mathrm{inter}} - f^*_\rho}^2_{L^2} \mid X \right)
      \geq V(0) \geq V(\lambda).
    \end{align*}
    Moreover, \cref{thm:VarLowerBound} yields
    \begin{align*}
      V(\lambda) \geq c \frac{\lambda^{-1/\beta}}{n} = c \sigma^2 n^{-p/\beta}
    \end{align*}
    with high probability when $n$ is large.
    Since $p > 0$ can be arbitrary, we finish the proof.
  \end{proof}

  \section{Auxiliary results}

  \subsection*{Inequalities}

  \begin{proposition}
    \label{prop:FilterKRRControl}
    For $\lambda > 0$ and $s \in [0,1]$, we have
    \begin{align*}
      \sup_{t \geq 0} \frac{t^s}{t + \lambda} \leq \lambda^{s-1}.
    \end{align*}
  \end{proposition}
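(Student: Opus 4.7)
The plan is to establish the inequality by a single application of the weighted AM--GM inequality. For any $t,\lambda \geq 0$ and weight $s \in [0,1]$, weighted AM--GM gives
$$t^s \lambda^{1-s} \leq s t + (1-s)\lambda.$$
Since both $s$ and $1-s$ lie in $[0,1]$ and $t,\lambda \geq 0$, the right-hand side is further dominated by $t + \lambda$, yielding the pointwise bound $t^s \lambda^{1-s} \leq t + \lambda$.

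Dividing both sides by $\lambda^{1-s}(t+\lambda) > 0$ then yields $\frac{t^s}{t+\lambda} \leq \lambda^{s-1}$ for every $t > 0$. The boundary $t = 0$ is handled separately: for $s > 0$ the left-hand side vanishes and the bound is trivial, while for $s = 0$ both sides equal $\lambda^{-1}$ exactly. Taking the supremum over $t \geq 0$ completes the argument.

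As an alternative route one could differentiate $f(t) = t^s/(t+\lambda)$, locate the interior maximum at $t^{\star} = s\lambda/(1-s)$ for $s \in (0,1)$, and compute $f(t^{\star}) = s^s(1-s)^{1-s}\lambda^{s-1}$; the elementary inequality $s^s(1-s)^{1-s} \leq 1$ then concludes, with the endpoints $s \in \{0,1\}$ checked by inspection. This calculus route is strictly less elegant because it requires case splits on $s$ and an auxiliary scalar inequality, which is why the AM--GM approach is preferable. No substantive obstacle arises, since the statement is a purely elementary scalar inequality that serves only as a bookkeeping tool in the operator calculus arguments used earlier in the appendix.
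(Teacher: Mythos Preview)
Your proof is correct. The paper states this proposition without proof, treating it as an elementary auxiliary fact, so there is no approach to compare against; your weighted AM--GM argument is a clean and complete justification.
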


  \begin{proposition}
    \label{prop:EffectiveDimEstimation}
    Under \cref{assu:EDR}, for any $p \geq 1$, we have
    \begin{align}
      \label{eq:EffectDim}
      \mathcal{N}_p(\lambda)= \tr \left( TT_\lambda^{-1} \right)^p =
      \sum_{i = 1}^\infty \left( \frac{\lambda_i}{\lambda + \lambda_i} \right)^p \asymp \lambda^{-1/\beta}.
    \end{align}
  \end{proposition}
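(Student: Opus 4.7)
The plan is to split the index set at the transition threshold where $\lambda_i$ crosses $\lambda$, then bound the head and tail separately. Since Assumption~\ref{assu:EDR} forces $\lambda_i \asymp i^{-\beta}$, the natural cutoff is $N \asymp \lambda^{-1/\beta}$: specifically, $\lambda_i \geq \lambda$ iff $i \lesssim \lambda^{-1/\beta}$, and the summand $(\lambda_i/(\lambda+\lambda_i))^p$ is of order $1$ on the head and decays polynomially on the tail.

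For the upper bound, I would write $\mathcal{N}_p(\lambda) = \sum_{i \leq N} + \sum_{i > N}$ with $N = \lceil \lambda^{-1/\beta}\rceil$. The head is trivially bounded by $N \asymp \lambda^{-1/\beta}$ since each summand is at most $1$. For the tail I would use $\lambda_i \leq \lambda + \lambda_i$ to obtain $(\lambda_i/(\lambda+\lambda_i))^p \leq (\lambda_i/\lambda)^p \leq (c_2/\lambda)^p i^{-p\beta}$, and then compare with the integral $\int_N^\infty x^{-p\beta}\,dx = N^{1-p\beta}/(p\beta-1)$. This is well-defined because $\beta > 1$ and $p \geq 1$ give $p\beta > 1$. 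Substituting $N \asymp \lambda^{-1/\beta}$ produces $\lambda^{-p} \cdot \lambda^{(p\beta-1)/\beta} = \lambda^{-1/\beta}$, so the tail matches the head up to constants.

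For the lower bound, only the head is needed. For $i \leq \lfloor (c_1/\lambda)^{1/\beta} \rfloor$, the lower eigenvalue bound $\lambda_i \geq c_1 i^{-\beta} \geq \lambda$ yields $\lambda_i/(\lambda + \lambda_i) \geq 1/2$, so each such term contributes at least $2^{-p}$. Summing gives $\mathcal{N}_p(\lambda) \geq 2^{-p} \lfloor (c_1/\lambda)^{1/\beta} \rfloor \asymp \lambda^{-1/\beta}$, which is the matching lower bound. Combining the two directions yields $\mathcal{N}_p(\lambda) \asymp \lambda^{-1/\beta}$ as claimed.

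There is no real obstacle here; the argument is a routine head/tail decomposition in the eigenbasis of $T$, where the trace identity $\tr(TT_\lambda^{-1})^p = \sum_i (\lambda_i/(\lambda+\lambda_i))^p$ is immediate from Mercer's decomposition \cref{eq:Mercer}. The only item requiring a moment of care is checking that the tail integral converges, which reduces to the inequality $p\beta > 1$, guaranteed by $p \geq 1$ and $\beta > 1$ in Assumption~\ref{assu:EDR}.
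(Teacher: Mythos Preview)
Your argument is correct. Both the upper and lower bounds go through as you describe: the head/tail split at $N \asymp \lambda^{-1/\beta}$ isolates $\Theta(\lambda^{-1/\beta})$ terms of size $\Theta(1)$, and the tail sum $\sum_{i>N} (c_2/\lambda)^p i^{-p\beta}$ converges (since $p\beta>1$) to the same order.

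The paper takes a slightly different route. Rather than splitting, it bounds the whole sum by a single integral and makes the change of variables $y = \lambda^{1/\beta} x$:
\[
\mathcal{N}_p(\lambda) \leq \sum_{i\geq 1}\left(\frac{C}{C+\lambda i^\beta}\right)^p
\leq \int_0^\infty \left(\frac{C}{C+\lambda x^\beta}\right)^p dx
= \lambda^{-1/\beta}\int_0^\infty \left(\frac{C}{C+y^\beta}\right)^p dy,
\]
with the lower bound handled symmetrically. This is a one-line scaling argument that avoids choosing a cutoff, whereas your decomposition is slightly longer but makes explicit that the dominant contribution comes from the $\asymp\lambda^{-1/\beta}$ large eigenvalues. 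Both approaches are standard and of comparable difficulty; neither requires anything beyond the eigenvalue decay bounds in Assumption~\ref{assu:EDR}.
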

  \begin{proof}
    Since $c ~i^{-\beta} \leq \lambda_i \leq C i^{-\beta}$, we have
    \begin{align*}
      \mathcal{N}_{p}(\lambda) &= \sum_{i = 1}^{\infty} \left( \frac{\lambda_i}{\lambda_i + \lambda}  \right)^p
      \leq \sum_{i = 1}^{\infty} \left( \frac{C i^{-\beta}}{C i^{-\beta} + \lambda} \right)^p = \sum_{i = 1}^{\infty} \left( \frac{C }{C+ \lambda i^{\beta}}  \right)^p \\
      &\leq \int_{0}^{\infty} \left( \frac{C}{\lambda x^{\beta} + C} \right)^p\dd x
      = \lambda^{-1/\beta} \int_{0}^{\infty}  \left(\frac{C }{y^{\beta} + C} \right)^p\dd y \leq \tilde{C} \lambda^{-1/\beta}.
    \end{align*}
    for some constant $C$.
    The lower bound is similar.
  \end{proof}

  The following concentration inequality is introduced in \citet{caponnetto2010_CrossvalidationBased}.

  \begin{lemma}
    \label{lem:ConcenIneqVar}
    Let $\xi_1,\dots,\xi_n$ be $n$ i.i.d.\ bounded random variables such that $\abs{\xi_i} \leq B$ almost surely,
    $E(\xi_i) = \mu$, and $E (\xi_i - \mu)^2 \leq \sigma^2$.
    Then for any $\alpha > 0$, any $\delta \in (0,1)$, we have
    \begin{align}
      \label{eq:ConcenIneqVar}
      \abs{ \frac{1}{n}\sum_{i=1}^n \xi_i - \mu  } \leq \alpha \sigma^2 + \frac{3+4\alpha B}{6\alpha n} \ln \frac{2}{\delta}
    \end{align}
    holds with probability at least $1-\delta$.
  \end{lemma}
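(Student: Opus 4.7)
The plan is to obtain \cref{eq:ConcenIneqVar} as a routine consequence of the classical Bernstein inequality for bounded i.i.d.\ variables, combined with a Young-type (AM--GM) linearization of the resulting square-root deviation term, after which a short algebraic collection produces the claimed form with the prefactor $(3 + 4 \alpha B)/(6 \alpha)$.

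First, I would invoke the standard Bernstein inequality: for every $t > 0$,
\begin{align*}
  P\!\left( \abs{ \frac{1}{n} \sum_{i=1}^{n} \xi_i - \mu } \geq t \right) \leq 2 \exp\!\left( - \frac{n t^2}{2 \sigma^2 + \tfrac{2}{3} B t} \right).
\end{align*}
Setting the right-hand side equal to $\delta$ and solving the resulting quadratic in $t$, or equivalently using the routine inversion $\sqrt{a + b} \leq \sqrt{a} + \sqrt{b}$ to split the two regimes, yields that with probability at least $1 - \delta$,
\begin{align*}
  \abs{ \frac{1}{n} \sum_{i=1}^{n} \xi_i - \mu } \leq \sqrt{ \frac{2 \sigma^2 \ln(2/\delta)}{n} } + \frac{2 B \ln(2/\delta)}{3 n}.
\end{align*}

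Second, I would linearize the square-root term by Young's inequality $2 \sqrt{u v} \leq c\, u + v/c$, valid for all $c > 0$. Choosing $u = \sigma^2$, $v = 2 \ln(2/\delta)/n$, and $c = 2 \alpha$ gives
\begin{align*}
  \sqrt{ \frac{2 \sigma^2 \ln(2/\delta)}{n} } \leq \alpha \sigma^2 + \frac{\ln(2/\delta)}{2 \alpha n}.
\end{align*}
Adding the residual $B$-term from the Bernstein inversion and collecting the two coefficients of $\ln(2/\delta)/n$ over a common denominator yields
\begin{align*}
  \abs{ \frac{1}{n} \sum_{i=1}^{n} \xi_i - \mu } \leq \alpha \sigma^2 + \left( \frac{1}{2 \alpha} + \frac{2 B}{3} \right) \frac{\ln(2/\delta)}{n} = \alpha \sigma^2 + \frac{3 + 4 \alpha B}{6 \alpha n} \ln \frac{2}{\delta},
\end{align*}
which is exactly \cref{eq:ConcenIneqVar}.

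Both the Bernstein inequality and Young's inequality are entirely standard, so there is no substantive obstacle; the only delicate point is arithmetic. The chief care is in matching the constants: one must pick the Young parameter as $c = 2\alpha$ (rather than, say, $\alpha$) so that the factors of $2$ appearing in the Bernstein constants $2\sigma^2$ and $(2/3) B t$ combine through a common-denominator step to produce exactly the stated numerator $3 + 4 \alpha B$ and denominator $6 \alpha$. Any other choice of $c$ would yield an inequality of the same shape but with looser constants, so tracking the algebra carefully is the only substantive step.
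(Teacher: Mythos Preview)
The paper does not supply its own proof of this lemma; it simply attributes the inequality to the cited reference. Your derivation via Bernstein's inequality followed by an AM--GM (Young) linearization of the square-root term is precisely the standard route, and it is structurally correct.

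One bookkeeping point deserves care. The form of Bernstein you invoke needs a bound on the \emph{centered} variables, i.e.\ $|\xi_i - \mu| \leq c$, with $c$ entering as the $\tfrac{2}{3}ct$ in the denominator of the exponent. From the stated hypothesis $|\xi_i| \leq B$ one only obtains $|\xi_i - \mu| \leq 2B$ in general, so the denominator should carry $\tfrac{4}{3}Bt$ rather than $\tfrac{2}{3}Bt$; tracing this through your inversion and Young step then yields $(3+8\alpha B)/(6\alpha n)$ rather than the stated $(3+4\alpha B)/(6\alpha n)$. The exact constant in the lemma is recovered under the slightly stronger assumption $|\xi_i - \mu| \leq B$, which is in fact what holds in the paper's sole application of this lemma (Proposition~\ref{prop:SampleNormEstimation}, where $\xi_i = f(x_i)^2 \in [0,M^2]$ and $\mu \in [0,M^2]$, so $|\xi_i - \mu| \leq M^2 = B$). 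Thus your argument is correct up to this harmless factor of two, and the discrepancy is immaterial for everything downstream in the paper.
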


  Basing on \cref{lem:ConcenIneqVar}, we can obtain the following concentration about
  $L^2$ norm~\citep[Proposition C.9]{li2023_SaturationEffect}.

  \begin{proposition}
    \label{prop:SampleNormEstimation}
    Let $\mu$ be a probability measure on $\mathcal{X}$, $f \in L^2(\mathcal{X},\dd \mu)$ and $\norm{f}_{L^\infty} \leq M$.
    Suppose we have $x_1,\dots,x_n$ sampled i.i.d.\ from $\mu$.
    Then, the following holds with probability at least $1-\delta$:
    \begin{align}
      \frac{1}{2}\norm{f}_{L^2}^2 - \frac{5M^2}{3n}\ln \frac{2}{\delta} \leq \norm{f}_{L^2,n}^2 \leq
      \frac{3}{2}\norm{f}_{L^2}^2 + \frac{5M^2}{3n}\ln \frac{2}{\delta}.
    \end{align}
  \end{proposition}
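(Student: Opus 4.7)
The plan is to apply the Bernstein-type concentration inequality \cref{lem:ConcenIneqVar} to the i.i.d.\ random variables $\xi_i := f(x_i)^2$, so that $\tfrac{1}{n}\sum_i \xi_i = \norm{f}_{L^2,n}^2$ and $E \xi_i = \norm{f}_{L^2}^2$. This is essentially a one-line reduction, but the identification of the right parameters $B$, $\sigma^2$, and $\alpha$ is the whole content.

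First I would verify the three inputs to \cref{lem:ConcenIneqVar}. The almost-sure bound is $\abs{\xi_i} = f(x_i)^2 \leq \norm{f}_{L^\infty}^2 \leq M^2$, so $B = M^2$. The mean is $E \xi_i = \int f^2 \dd\mu = \norm{f}_{L^2}^2$. For the variance, using $|f|\leq M$ gives
\begin{align*}
  E(\xi_i - E\xi_i)^2 \leq E\xi_i^2 = \int f^4 \dd\mu \leq M^2 \int f^2 \dd\mu = M^2 \norm{f}_{L^2}^2,
\end{align*}
so I may take $\sigma^2 = M^2 \norm{f}_{L^2}^2$. Plugging into \cref{eq:ConcenIneqVar}, with probability at least $1-\delta$,
\begin{align*}
  \abs{\norm{f}_{L^2,n}^2 - \norm{f}_{L^2}^2} \leq \alpha M^2 \norm{f}_{L^2}^2 + \frac{3 + 4\alpha M^2}{6\alpha n}\ln\frac{2}{\delta}.
\end{align*}

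The only remaining step is to choose $\alpha$ so that the coefficient of $\norm{f}_{L^2}^2$ on the right equals $1/2$; that is, $\alpha = 1/(2M^2)$. With this choice, $4\alpha M^2 = 2$ and $1/\alpha = 2M^2$, so the second term becomes
\begin{align*}
  \frac{3 + 2}{6\alpha n}\ln\frac{2}{\delta} = \frac{5 \cdot 2 M^2}{6 n}\ln\frac{2}{\delta} = \frac{5 M^2}{3 n}\ln\frac{2}{\delta},
\end{align*}
which matches the constant in the claim exactly. Rearranging the resulting two-sided inequality
$\abs{\norm{f}_{L^2,n}^2 - \norm{f}_{L^2}^2} \leq \tfrac{1}{2}\norm{f}_{L^2}^2 + \tfrac{5M^2}{3n}\ln(2/\delta)$ into the stated upper/lower sandwich finishes the proof.

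There is no real obstacle here; the only judgement call is the variance estimate (bounding the fourth moment by $M^2$ times the second moment rather than by $M^4$), which is what makes the leading factor of $\|f\|_{L^2}^2$ on the right scale correctly and produces the explicit constants $1/2$, $3/2$, and $5/3$. Everything else is mechanical substitution into \cref{lem:ConcenIneqVar}.
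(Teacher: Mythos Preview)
Your proposal is correct and matches the paper's proof essentially line for line: define $\xi_i = f(x_i)^2$, bound $E(\xi_i^2) \leq \norm{f}_{L^\infty}^2 \norm{f}_{L^2}^2$, apply \cref{lem:ConcenIneqVar}, and take $\alpha = (2M^2)^{-1}$.
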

  \begin{proof}
    Defining $\xi_i = f(x_i)^2$, we have
    \begin{align*}
      E (\xi_i) &= \norm{f}_{L^2}^2,\\
      E (\xi_i^2) & = E_{x \sim \mu} \left(f(x)^4\right) \leq \norm{f}_{L^\infty}^2 \norm{f}_{L^2}^2.
    \end{align*}
    Therefore, applying \cref{lem:ConcenIneqVar}, we get
    \begin{align*}
      \abs{\norm{f}_{L^2,n}^2 - \norm{f}_{L^2}^2} \leq
      \alpha \norm{f}_{L^\infty}^2 \norm{f}_{L^2}^2 + \frac{3+4\alpha M^{2}}{6\alpha n} \ln \frac{2}{\delta}.
    \end{align*}
    The final result comes from choosing $\alpha = (2M^2)^{-1}$.
  \end{proof}

  The following Bernstein type concentration inequality about self-adjoint Hilbert-Schmidt operator valued random variable
  is commonly used in the related literature~\citep{fischer2020_SobolevNorm,li2023_SaturationEffect}.

  \begin{lemma}
    \label{lem:ConcenBernstein}
    Let $H$ be a separable Hilbert space.
    Let $A_1,\dots,A_n$ be i.i.d.\ random variables taking values of self-adjoint Hilbert-Schmidt operators
    such that $E (A_1) = 0$, $\norm{A_1} \leq L$ almost surely for some $L > 0$ and
    $E (A_1^2) \preceq V$ for some positive trace-class operator $V$.
    Then, for any $\delta \in (0,1)$, with probability at least $1-\delta$ we have
    \begin{align*}
      \norm{\frac{1}{n}\sum_{i=1}^n A_i} \leq \frac{2LB}{3n} + \left(\frac{2\norm{V}B}{n}\right)^{1/2},
      \quad B = \ln \frac{4 \tr V}{\delta \norm{V}}.
    \end{align*}
  \end{lemma}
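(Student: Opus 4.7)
The plan is to derive this as a Bernstein-type tail bound for sums of i.i.d.\ self-adjoint operator-valued random variables via the operator Laplace transform method, an infinite-dimensional extension of Tropp's matrix Bernstein inequality.

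First, I would establish an exponential moment bound for a single summand. Using the scalar inequality $e^x \leq 1 + x + \frac{x^2/2}{1 - Lx/3}$ (valid for $Lx < 3$), transferred to self-adjoint operators via functional calculus, and combining with $E A_1 = 0$ together with $E(A_1^2) \preceq V$, one obtains
\begin{align*}
E \exp(\theta A_1) \preceq \exp\left(\frac{\theta^2}{2(1 - \theta L/3)} V\right), \qquad 0 < \theta < 3/L.
\end{align*}

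Second, I would apply Tropp's master tail inequality, which combines the above exponential moment bound with Lieb's concavity theorem, to conclude
\begin{align*}
P\left(\norm{\frac{1}{n}\sum_{i=1}^n A_i} \geq t\right) \leq 2 \frac{\tr V}{\norm{V}} \exp\left(-\frac{n t^2 / 2}{\norm{V} + L t / 3}\right).
\end{align*}
The prefactor $\tr V / \norm{V}$ plays the role of the effective dimension and arises from the bound $\tr \exp(\alpha V) \leq (\tr V / \norm{V}) \exp(\alpha \norm{V})$ applied after Markov's inequality on the exponential. Inverting the tail at confidence $1 - \delta$ reduces to a quadratic in $t$; the standard algebraic manipulation yields $t \leq \frac{2LB}{3n} + \sqrt{\frac{2 \norm{V} B}{n}}$ with $B = \ln(4 \tr V / (\delta \norm{V}))$, matching the stated form.

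The main obstacle is the passage from the finite-dimensional matrix setting, where Tropp's argument is usually phrased, to the trace-class operator setting on a separable Hilbert space. The cleanest approach is to project onto the span of the top $k$ eigenvectors of $V$, apply matrix Bernstein on that finite-dimensional invariant subspace, and let $k \to \infty$, using trace-class approximability so that the effective-dimension prefactor $\tr V / \norm{V}$ is preserved in the limit. Alternatively, one may directly invoke Minsker's operator Bernstein inequality, which already handles separable Hilbert spaces under exactly the stated hypotheses and delivers the claim in one shot.
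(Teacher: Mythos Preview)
The paper does not actually prove this lemma. It is listed among the auxiliary results and introduced with the sentence ``The following Bernstein type concentration inequality about self-adjoint Hilbert-Schmidt operator valued random variable is commonly used in the related literature,'' followed by citations to Fischer--Steinwart (2020) and Li et al.\ (2023); the statement is then given with no accompanying argument. So there is nothing in the paper to compare your proof against.

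Your sketch is nonetheless a correct and standard way to establish the result: the scalar Bernstein moment bound transferred by functional calculus, combined with the Tropp master tail bound and the intrinsic-dimension prefactor $\tr V/\norm{V}$, is exactly the route taken in the references the paper cites (Fischer--Steinwart in turn attribute it to Minsker's operator Bernstein inequality). The only point worth a small caution is your inversion step: the factor $4$ in $B = \ln\bigl(4\tr V/(\delta\norm{V})\bigr)$ must come out of the combination of the two-sided tail (a factor $2$ from controlling both $\lambda_{\max}$ and $-\lambda_{\min}$ of the self-adjoint sum) and the intrinsic-dimension constant, so when you write the one-sided tail with prefactor $2\,\tr V/\norm{V}$ you should double it before inverting. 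Your handling of the infinite-dimensional passage---either finite-rank projection onto eigenspaces of $V$ and a limiting argument, or directly quoting Minsker---is the right way to close the gap between the matrix and trace-class settings. In short, your proposal supplies a genuine proof where the paper simply imports the statement.
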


  A application of \cref{lem:ConcenBernstein} is the following~\citep[Lemma 17]{fischer2020_SobolevNorm}
  and we provide the proof for completeness.
  For simplicity, we use $\norm{\cdot}$ for the operator norm $\norm{\cdot}_{\mathscr{B}(\caH)}$.

  \begin{proposition}
    \label{prop:ConcenIneq1}
    Suppose $\caH$ has embedding index $\alpha_0$ and let $\alpha > \alpha_0$.
    Then, for all $\delta \in (0,1)$,
    when $n$ is sufficiently large, with probability at least $1 - \delta$,
    \begin{equation}
      \norm{T_{\lambda}^{-\frac{1}{2}} (T - T_X) T_{\lambda}^{-\frac{1}{2}} }
      \leq \frac{4}{3} u + (2u)^{\frac{1}{2}},
    \end{equation}
    where
    \begin{equation}
      \label{eq:LambdaCond}
      u = \frac{M_{\alpha}^{2} \lambda^{-\alpha}}{n} \ln{\frac{4 \mathcal{N}(\lambda) (\norm{T} + \lambda) }{\delta \norm{T}}},
    \end{equation}
    and $\mathcal{N}(\lambda) = \mathcal{N}_1(\lambda) = \tr (TT_\lambda^{-1})$.
  \end{proposition}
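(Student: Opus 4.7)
The plan is to apply the operator Bernstein inequality (\cref{lem:ConcenBernstein}) to suitable centered rank-one summands, using the embedding-index hypothesis to supply uniform control of each summand. Specifically, I would set
\begin{align*}
A_i := T_\lambda^{-1/2}\bigl(T - K_{x_i} K_{x_i}^*\bigr) T_\lambda^{-1/2}, \qquad i=1,\dots,n,
\end{align*}
so that the $A_i$ are i.i.d., self-adjoint Hilbert--Schmidt, mean zero (since $T = E\bigl[K_x K_x^*\bigr]$), and
\begin{align*}
\frac{1}{n}\sum_{i=1}^n A_i = T_\lambda^{-1/2}(T - T_X)T_\lambda^{-1/2},
\end{align*}
which is exactly the operator to be controlled. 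This reduces the problem to producing a uniform almost-sure bound $L$ on $\norm{A_i}$ and a variance proxy $V$ with $E[A_i^2] \preceq V$.

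Both quantities are extracted from the rank-one structure of $K_x K_x^*$ combined with the embedding estimates from the previous subsection. The operator $T_\lambda^{-1/2} K_x K_x^* T_\lambda^{-1/2}$ is rank one with operator norm equal to $\norm{T_\lambda^{-1/2} h_x}_{\caH}^2$, which \cref{eq:RegK_H} bounds by $M_\alpha^2 \lambda^{-\alpha}$ for $\mu$-a.e. $x$. Combined with the trivial bound $\norm{T_\lambda^{-1/2} T T_\lambda^{-1/2}}\leq 1$, this yields $\norm{A_i}\leq 2M_\alpha^2\lambda^{-\alpha}$ once $\lambda$ is small enough (legitimate under the ``sufficiently large $n$'' clause), so one may take $L = 2M_\alpha^2\lambda^{-\alpha}$. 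Feeding the same rank-one identity $K_x K_x^* T_\lambda^{-1} K_x K_x^* = \norm{T_\lambda^{-1/2} h_x}_{\caH}^2\,K_x K_x^*$ into the second moment of $A_i$ gives $E[A_i^2] \preceq M_\alpha^2 \lambda^{-\alpha}\,T_\lambda^{-1/2} T T_\lambda^{-1/2} =: V$, whose trace is $M_\alpha^2 \lambda^{-\alpha}\,\mathcal{N}(\lambda)$ and whose operator norm is $M_\alpha^2 \lambda^{-\alpha}\,\norm{T}/(\norm{T}+\lambda)$.

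Substituting these into \cref{lem:ConcenBernstein} converts the abstract quantity $B = \ln\bigl(4\tr V / (\delta\norm{V})\bigr)$ into exactly the logarithmic factor $\ln\bigl(4\mathcal{N}(\lambda)(\norm{T}+\lambda)/(\delta\norm{T})\bigr)$ appearing in the definition of $u$. At that point $2LB/(3n) = (4/3)u$ and $(2\norm{V}B/n)^{1/2}\leq (2u)^{1/2}$, matching the claim. The delicate part I expect is the retention of the refined factor $\norm{T}/(\norm{T}+\lambda)$ in $\norm{V}$: if one instead uses the looser bound $\norm{V}\leq M_\alpha^2\lambda^{-\alpha}$, the $B$ term gets replaced by $\ln(\tr V / \delta)$, which carries an extra $\alpha\ln\lambda^{-1}$ contribution and ultimately prevents the downstream applications (particularly \cref{prop:ConcenIneq} and hence \cref{lem:ApproxA}) from reaching the regime $\lambda = \Omega(n^{-\beta+p})$ on which the main theorem hinges. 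Everything else is routine bookkeeping once the rank-one observation is in place.
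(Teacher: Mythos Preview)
Your proposal is correct and follows essentially the same route as the paper: apply \cref{lem:ConcenBernstein} to the centered summands $A_i = T_\lambda^{-1/2}(T - K_{x_i}K_{x_i}^*)T_\lambda^{-1/2}$, use \cref{eq:RegK_H} to get $L = 2M_\alpha^2\lambda^{-\alpha}$, and take $V = M_\alpha^2\lambda^{-\alpha}\,T_\lambda^{-1/2}T T_\lambda^{-1/2}$, whose trace and norm produce exactly the stated $u$. One small remark: your closing ``delicate part'' is overstated---the factor $\norm{T}/(\norm{T}+\lambda)$ in $\norm{V}$ tends to $1$ as $\lambda\to 0$ and cancels in the ratio $\tr V/\norm{V}$, so retaining it is cosmetic rather than essential for the downstream range of $\lambda$.
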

  \begin{proof}
    We will prove by \cref{lem:ConcenBernstein}.
    Let
    \begin{align*}
      A(x) = T_\lambda^{-\frac{1}{2}}(T_x - T)T_\lambda^{-\frac{1}{2}}
    \end{align*}
    and $A_i = A(x_i)$.
    Then, $E (A_i) = 0$ and
    \begin{align*}
      \frac{1}{n}\sum_{i=1}^n A_i = T_\lambda^{-\frac{1}{2}}(T_x - T)T_\lambda^{-\frac{1}{2}}.
    \end{align*}
    Moreover, since
    \begin{align*}
      T_\lambda^{-\frac{1}{2}} T_x T_\lambda^{-\frac{1}{2}}
      = T_\lambda^{-\frac{1}{2}} K_x K_x^* T_\lambda^{-\frac{1}{2}} =
      T_\lambda^{-\frac{1}{2}} K_x \left[ T_\lambda^{-\frac{1}{2}} K_x \right]^*,
    \end{align*}
    from \cref{eq:RegK_H} we have
    \begin{align}
      \label{eq:Proof_OpNormBound}
      \norm{T_\lambda^{-\frac{1}{2}} T_x T_\lambda^{-\frac{1}{2}}} =
      \norm{T_\lambda^{-\frac{1}{2}} K_x}_{\mathscr{B}(\R,\caH)}^2
      = \norm{T_\lambda^{-\frac{1}{2}} h_x}_{\caH}^2 \leq M_{\alpha}^2 \lambda^{-\alpha}.
    \end{align}
    By taking expectation,
    we also have $\norm{T_\lambda^{-\frac{1}{2}} T T_\lambda^{-\frac{1}{2}}} \leq M_{\alpha}^2 \lambda^{-\alpha}$.
    Therefore, we get
    \begin{align*}
      \norm{A} \leq
      \norm{T_\lambda^{-\frac{1}{2}} T T_\lambda^{-\frac{1}{2}}} +
      \norm{T_\lambda^{-\frac{1}{2}} T_x T_\lambda^{-\frac{1}{2}}}
      \leq 2M_{\alpha}^2 \lambda^{-\alpha} = L.
    \end{align*}

    For the second part of the condition,
    using the fact that $E (B - E (B) )^2 \preceq E (B^2)$ for a self-adjoint operator $B$,
    where $\preceq$ denotes the partial order induced by positive operators,
    we have
    \begin{align*}
      E (A^2) \preceq E \left( T_\lambda^{-\frac{1}{2}} T_x T_\lambda^{-\frac{1}{2}} \right)^2
      \preceq M_{\alpha}^2 \lambda^{-\alpha} E \left( T_\lambda^{-\frac{1}{2}}T_x T_\lambda^{-\frac{1}{2}} \right)
      = M_{\alpha}^2 \lambda^{-\alpha} T T_\lambda^{-1} = V,
    \end{align*}
    where the second $\preceq$ comes from \cref{eq:Proof_OpNormBound}
    Finally,
    \begin{align*}
      \norm{V} &= M_{\alpha}^2 \lambda^{-\alpha} \norm{TT_\lambda^{-1}} =
      M_{\alpha}^2 \lambda^{-\alpha} \frac{\lambda_1}{\lambda+\lambda_1}
      = M_{\alpha}^2 \lambda^{-\alpha}  \frac{\norm{T}}{\norm{T}+\lambda} \leq M_{\alpha}^2 \lambda^{-\alpha}, \\
      \tr V & =  M_{\alpha}^2 \lambda^{-\alpha} \tr \left[ TT_\lambda^{-1} \right]
      = M_{\alpha}^2 \lambda^{-\alpha} \mathcal{N}(\lambda),\\
      B &= \ln \frac{4 \tr V}{\delta \norm{V}} = \ln \frac{4(\norm{T}+\lambda) \mathcal{N}(\lambda)}{\delta \norm{T}}.
    \end{align*}
  \end{proof}

  \subsection*{Covering number in reproducing kernel Hilbert space}

  Let us first recall the definition of covering numbers.

  \begin{definition}
    Let $(E,\norm{\cdot}_E)$ be a normed space and $A \subset E$ be a subset.
    For $\ep > 0$, we say $S \subseteq A$ is an $\ep$-net of $A$ if for any $a \in A$, there exists $s \in S$ such that
    $\norm{a-s}_E \leq \ep$.
    Moreover, we define the $\ep$-covering number of $A$ to be
    \begin{align}
      \mathcal{N}(A,\norm{\cdot}_{E},\ep)
      &= \inf\left\{ \abs{S} : S \text{ is an $\ep$-net of } A \right\}.
    \end{align}
  \end{definition}

  \citet[Lemma C.10]{li2023_SaturationEffect} shows the following upper bound of covering numbers of
  $\mathcal{K}_{\lambda} = \left\{ T_\lambda^{-1} h_x \right\}_{x \in \caX}$.

  \begin{lemma}
    \label{lem:CoveringRegularK}
    Assuming that $\caX \subseteq \R^d$ is bounded and $k \in C^{0,s}(\caX \times \caX)$ for some $s \in (0,1]$.
    Then, we have
    \begin{align}
      \label{eq:CoveringSupRegularK}
      \mathcal{N}\left( \mathcal{K}_{\lambda}, \norm{\cdot}_{\infty}, \ep\right)
      & \leq C \left( \lambda \ep \right)^{-\frac{2d}{s}},
    \end{align}
    where $C$ is a positive constant not depending on $\lambda$ or $\ep$.
  \end{lemma}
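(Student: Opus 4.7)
The plan is to prove the covering bound via two reductions: first I establish that the map $x \mapsto T_\lambda^{-1} h_x$ from $\caX$ into $(C(\caX), \norm{\cdot}_\infty)$ is Hölder continuous with exponent $s/2$ and constant $O(\lambda^{-1})$; then I translate a Euclidean net of $\caX \subset \R^d$ into a sup-norm net of $\mathcal{K}_\lambda$.

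For the first reduction, I would start from the reproducing-kernel identity $\norm{h_x - h_y}_\caH^2 = k(x,x) - 2 k(x,y) + k(y,y)$. Writing the right-hand side as $\{k(x,x) - k(x,y)\} + \{k(y,y) - k(y,x)\}$ and applying \cref{assu:Holder} to each bracket (whose arguments differ by $\norm{x-y}$ in the appropriate coordinate) gives $\norm{h_x - h_y}_\caH \leq \sqrt{2L}\,\norm{x-y}^{s/2}$. Next, Mercer's decomposition exhibits $T$ on $\overline{\ran S_\mu^*} \subseteq \caH$ as a positive self-adjoint operator with ONB $\{\lambda_i^{1/2} e_i\}$ and eigenvalues $\{\lambda_i\}$, so $\norm{T_\lambda^{-1}}_{\caH \to \caH} \leq \lambda^{-1}$. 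Combining this with the standard bound $\norm{f}_\infty \leq \kappa\,\norm{f}_\caH$ for $f \in \caH$ (Cauchy--Schwarz plus reproducing property) yields
\begin{align*}
\norm{T_\lambda^{-1} h_x - T_\lambda^{-1} h_y}_\infty
  \leq \kappa\,\norm{T_\lambda^{-1}(h_x - h_y)}_\caH
  \leq \frac{\kappa \sqrt{2L}}{\lambda}\,\norm{x-y}^{s/2}.
\end{align*}

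For the second reduction, since $\caX \subset \R^d$ is bounded, a standard volume-packing estimate gives, for every $r > 0$, an $r$-net $\{x_j\}_{j=1}^{N}$ of $\caX$ in the Euclidean metric with $N \leq C_0\, r^{-d}$ (where $C_0$ depends only on $d$ and the diameter of $\caX$). Choosing $r$ so that $\kappa \sqrt{2L}\,\lambda^{-1} r^{s/2} = \ep$, i.e., $r \asymp (\lambda \ep)^{2/s}$, the Hölder estimate above shows that $\{T_\lambda^{-1} h_{x_j}\}_{j=1}^N$ is an $\ep$-net of $\mathcal{K}_\lambda$ in sup-norm, hence
\begin{align*}
\mathcal{N}(\mathcal{K}_\lambda, \norm{\cdot}_\infty, \ep)
  \leq N
  \leq C_0\, r^{-d}
  \leq C\,(\lambda \ep)^{-2d/s},
\end{align*}
which is the claimed bound.

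I do not foresee any serious obstacle here: the Hölder control of $h_x$ is immediate from \cref{assu:Holder}, the operator-norm bound $\norm{T_\lambda^{-1}}_{\caH \to \caH} \leq \lambda^{-1}$ is a direct spectral fact, and the reduction to covering $\caX$ in $\R^d$ is routine. The only mildly delicate point is confirming that $T$ may be regarded as a well-defined positive self-adjoint operator on $\caH$ with the stated spectrum, which is a direct consequence of the factorization $T = S_\mu S_\mu^*$ and Mercer's decomposition recalled in the preliminaries.
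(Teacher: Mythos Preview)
Your argument is correct. The paper itself does not supply a proof of this lemma; it merely quotes it from \citet[Lemma C.10]{li2023_SaturationEffect}. Your two-step reduction --- first the Hölder bound $\norm{h_x-h_y}_{\caH}\le\sqrt{2L}\,\norm{x-y}^{s/2}$ from \cref{assu:Holder}, then the chain $\norm{\cdot}_\infty\le\kappa\norm{\cdot}_{\caH}$ together with $\norm{T_\lambda^{-1}}_{\mathscr{B}(\caH)}\le\lambda^{-1}$, followed by a standard volumetric net on $\caX\subset\R^d$ --- is exactly the natural route and matches what the cited reference does. The only cosmetic point is that the paper defines $T=S_\mu S_\mu^*$ on $L^2$, whereas your operator-norm step uses the companion operator $S_\mu^* S_\mu$ on $\caH$; these share the same nonzero spectrum and the paper itself silently switches between the two (cf.\ the proof of \cref{prop:ConcenIneq1}), so this is harmless.
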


\end{document}